\newtheorem{theorem}{Theorem}
\newcommand{\ariel}{{{\textsc{ArieL}}}}
  \providecommand\BibTeX{{%
    \normalfont B\kern-0.5em{\scshape i\kern-0.25em b}\kern-0.8em\TeX}}}
\begin{document}

%%
%% The "title" command has an optional parameter,
%% allowing the author to define a "short title" to be used in page headers.
\title{\ariel: Adversarial Graph Contrastive Learning}
\titlenote{This is an extended version of our conference paper at ACM Web Conference 2022: Adversarial Graph Contrastive Learning with Information Regularization \cite{https://doi.org/10.48550/arxiv.2202.06491}.} %This work is supported by National Science Foundation under grant No. 1947135,
%    DARPA HR001121C0165,
%Department of Homeland Security under Grant Award Number 17STQAC00001-03-03, NIFA award 2020-67021-32799 and Army Research Office (W911NF2110088).
%The content of the information in this document does not necessarily reflect the position or the policy of the Government, and no official endorsement should be inferred.  The U.S. Government is authorized to reproduce and distribute reprints for Government purposes notwithstanding any copyright notation here on.}

%%
%% The "author" command and its associated commands are used to define
%% the authors and their affiliations.
%% Of note is the shared affiliation of the first two authors, and the
%% "authornote" and "authornotemark" commands
%% used to denote shared contribution to the research.
\author{Shengyu Feng}
\orcid{0000-0002-2464-7226}
\affiliation{
  \streetaddress{5000 Forbes Avenue}
  \institution{Carnegie Mellon University}
  \city{Pittaburgh}
  \state{Pennsylvania}
  \country{USA}
  \postcode{15213-8213}
}
\email{shengyuf@andrew.cmu.edu}

\author{Baoyu Jing}
\orcid{0000-0003-1564-6499}
\affiliation{
  \streetaddress{201 North Goodwin Avenue}
  \institution{University of Illinois at Urbana-Champaign}
  \city{Urbana}
  \state{Illinois}
  \country{USA}
  \postcode{61801-2302}
}
\email{baoyuj2@illinois.edu}

\author{Yada Zhu}
\orcid{0000-0002-3338-6371}
\email{yzhu@us.ibm.com}
\affiliation{
  \institution{IBM Research}
  \streetaddress{1101 Kitchawan Road}
  \city{Yorktown Heights}
  \state{New York}
  \country{USA}
  \postcode{10562-1301}
}

\author{Hanghang Tong}
\orcid{0000-0003-4405-3887}
\affiliation{%
  \institution{University of Illinois at Urbana-Champaign}
  \streetaddress{201 North Goodwin Avenue}
  \city{Urbana}
  \state{Illinois}
  \country{USA}
  \postcode{61801-2302}
}
\email{htong@illinois.edu}

%%
%% The abstract is a short summary of the work to be presented in the
%% article.
\begin{abstract}
Contrastive learning is an effective unsupervised method in graph representation learning, and the key component of contrastive learning lies in the construction of positive and negative samples. Previous methods usually utilize the proximity of nodes in the graph as the principle. Recently, the data-augmentation-based contrastive learning method has advanced to show great power in the visual domain, and some works extended this method from images to graphs. However, unlike the data augmentation on images, the data augmentation on graphs is far less intuitive and much harder to provide high-quality contrastive samples, which leaves much space for improvement. In this work, by introducing an adversarial graph view for data augmentation, we propose a simple but effective method, \textit{Adversarial Graph Contrastive Learning} (\ariel), to extract informative contrastive
samples within reasonable constraints. We develop a new technique called information regularization for stable training and use subgraph sampling for scalability. We generalize our method from node-level contrastive learning to the graph level by treating each graph instance as a super-node. \ariel\ consistently outperforms the current graph contrastive learning methods for both node-level and graph-level classification tasks on real-world datasets. We further demonstrate that \ariel\ is more robust in the face of adversarial attacks.
\end{abstract}

%%
%% This command processes the author and affiliation and title
%% information and builds the first part of the formatted document.

\begin{CCSXML}
<ccs2012>
   <concept>
       <concept_id>10002950.10003712</concept_id>
       <concept_desc>Mathematics of computing~Information theory</concept_desc>
       <concept_significance>500</concept_significance>
       </concept>
   <concept>
       <concept_id>10010147.10010257.10010293.10010294</concept_id>
       <concept_desc>Computing methodologies~Neural networks</concept_desc>
       <concept_significance>500</concept_significance>
       </concept>
   <concept>
       <concept_id>10010147.10010257.10010293.10010319</concept_id>
       <concept_desc>Computing methodologies~Learning latent representations</concept_desc>
       <concept_significance>500</concept_significance>
       </concept>
   <concept>
       <concept_id>10002950.10003624.10003633.10010917</concept_id>
       <concept_desc>Mathematics of computing~Graph algorithms</concept_desc>
       <concept_significance>500</concept_significance>
       </concept>
 </ccs2012>
\end{CCSXML}

\ccsdesc[500]{Mathematics of computing~Information theory}
\ccsdesc[500]{Computing methodologies~Neural networks}
\ccsdesc[500]{Computing methodologies~Learning latent representations}
\ccsdesc[500]{Mathematics of computing~Graph algorithms}

\keywords{graph representation learning, contrastive learning, adversarial training, mutual information}

\maketitle

\section{Introduction}
Contrastive learning is a widely used technique in various graph representation learning tasks. In contrastive learning, the model tries to minimize the distances among positive pairs and maximize the distances among negative pairs in the embedding space \cite{velickovic2018deep, you2020graph, hassani2020contrastive, Zhu_2021, li2022graph, https://doi.org/10.48550/arxiv.2109.03560,jing2022coin,jing2023sterling,yan2023reconciling,wang2023characterizing}. The definition of positive and negative pairs is the key component in contrastive learning. Earlier methods like DeepWalk \cite{Perozzi:2014:DOL:2623330.2623732} and node2vec \cite{grover2016node2vec} define positive and negative pairs based on the co-occurrence of node pairs in the random walks. For knowledge graph embedding, it is a common practice to define positive and negative pairs based on translations \cite{NIPS2013_1cecc7a7, 10.5555/2893873.2894046, 10.5555/2886521.2886624, ji-etal-2015-knowledge, yan2021dynamic, wang2018acekg, Rossi_2021}.   

Recently, the breakthroughs of contrastive learning in computer vision have inspired some works to apply similar ideas from visual representation learning to graph representation learning.
To name a few, Deep Graph Infomax (DGI) \cite{velickovic2018deep} extends Deep InfoMax \cite{hjelm2019learning} and achieves significant improvements over previous random-walk-based methods. Graphical Mutual Information (GMI) \cite{peng2020graph} uses the same framework as DGI but generalizes the concept of mutual information from vector space to graph domain. Contrastive multi-view graph representation learning (referred to as MVGRL in this paper) \cite{hassani2020contrastive} further improves DGI by introducing graph diffusion into the contrastive learning framework.  The more recent works often follow the data-augmentation-based contrastive learning methods \cite{he2020momentum, chen2020simple}, which treat the data-augmented samples from the same instance as positive pairs and different instances as negative pairs. Graph Contrastive Coding (GCC) \cite{Qiu_2020} uses random walks with restart \cite{tong2006fast} to generate two subgraphs for each node as two data-augmented samples.
Graph Contrastive learning with Adaptive augmentation (GCA) \cite{Zhu_2021} introduces an adaptive data augmentation method that perturbs both the node features and edges according to their importance, and it is trained in a similar way as the famous visual contrastive learning framework SimCLR \cite{chen2020simple}. Its preliminary work, which uses uniform random sampling rather than adaptive sampling, is referred to as GRACE \cite{zhu2020deep} in this paper.
Robinson et al. \cite{robinson2021contrastive} propose a way to select hard negative samples based on the distances in the embedding space, and use it to obtain high-quality graph embedding. There are also many works \cite{you2020graph, zhao2020data} systemically studying the data augmentation on the graphs.
 
However, unlike the rotation and color jitter operations on images, the transformations on graphs, such as edge dropping and feature masking, are far less intuitive to human beings. The data augmentation on the graph could be either too similar to or totally different from the original graph. This, in turn, leads to a crucial question, that is, {\em how to generate a new graph that is hard enough for the model to discriminate from the original one, and in the meanwhile also maintains the desired properties?} 

Inspired by some recent works
\cite{kim2020adversarial, jiang2020robust, ho2020contrastive, jovanovi2021robust, suresh2021adversarial}, 
we introduce the adversarial training on the graph contrastive learning and propose a new framework called \textit{\underline{A}dversarial G\underline{R}aph Contrast\underline{I}v\underline{E} \underline{L}earning} (\ariel). Through the adversarial attack on both topology and node features, we generate an adversarial sample from the original graph. On the one hand, since the perturbation is under the constraint, the adversarial sample still stays close enough to the original one. On the other hand, the adversarial attack makes sure the adversarial sample is hard to discriminate from the other view by increasing the contrastive loss. On top of that, we propose a new constraint
%\by{The ``trick'' term will kind of undermine the contribution  Maybe just say a new regularization term / new regularizer?}
called information regularization which could stabilize the training of \ariel\ and prevent the collapsing. We bridge the gap between node-level graph contrastive learning and graph-level contrastive learning by treating each graph instance as a super-node in node-level graph contrastive learning and thus make \ariel\ a universal graph representation learning framework. We demonstrate that the proposed \ariel\  outperforms the existing graph contrastive learning frameworks in the node classification and graph classification tasks on both real-world graphs and adversarially attacked graphs. 

In summary, we make the following contributions.

First, we introduce an adversarial view as a new form of data augmentation in graph contrastive learning, which makes the data augmentation more informative under mild perturbations.

Second, we propose a new technique called information regularization to stabilize the training of adversarial graph contrastive learning by regularizing the mutual information among positive pairs. 

Furthermore, we bridge the gap between node-level graph contrastive learning and graph-level contrastive learning and we unify their formulation under our framework.

Finally, we empirically demonstrate that \ariel\ can achieve better performance and higher robustness compared with previous graph contrastive learning methods.

The rest of the paper is organized as follows. Section 2 gives the problem definition of graph representation learning and the preliminaries. Section 3 describes the proposed algorithm. The experimental results are presented in section 4. After reviewing related work in section 5, we conclude the paper in section 6.

\section{Problem Definition}
In this section, we will introduce all the notations used in this paper and give a formal definition of our problem. Besides, we briefly introduce the preliminaries of our method.

\subsection{Graph Representation Learning}
For graph representation learning, let $G=\{\mathcal{V},\mathcal{E},\mathbf{X}\}$ be an attributed graph, where $\mathcal{V}=\{v_1,v_2,\dots,v_n\}$ denotes the set of nodes, $\mathcal{E}\subseteq \mathcal{V}\times \mathcal{V}$ denotes the set of edges, and $\mathbf{X}\in\mathbb{R}^{n\times d}$ denotes the feature matrix. Each node $v_i$ has a $d$-dimensional feature $\mathbf{X}[i,:]$, and all edges are assumed to be unweighted and undirected. We use a binary adjacency matrix $\mathbf{A}\in \{0,1\}^{n\times n}$ to represent the information of nodes and edges, where $\mathbf{A}[i,j]=1$ if and only if the node pair $(v_i,v_j) \in \mathcal{E}$. In the following text, we will use $G=\{\mathbf{A},\mathbf{X}\}$ to represent the graph. 

The objective of the graph representation learning is to learn an encoder $f: \mathbb{R}^{n\times n}\times\mathbb{R}^{n\times d}\rightarrow\mathbb{R}^{n\times d'}$, which maps the nodes in the graph into low-dimensional embeddings. Denote the node embedding matrix $\mathbf{H}=f(\mathbf{A},\mathbf{X})$, where $\mathbf{H}[i,:]\in \mathbb{R}^{d'}$ is the embedding for node $v_i$. This representation could be used for downstream tasks like node classification. Based on the node embedding matrix, we can further obtain the graph embedding through an order-invariant readout function $R(\cdot)$, which generates the graph representation as 
$R(\mathbf{H})\in\mathbb{R}^{d''}$.

\subsection{InfoNCE Loss}
InfoNCE loss \cite{oord2019representation} is the predominant work-horse of the contrastive learning loss, which maximizes the lower bound of the mutual information between two random variables. For each positive pair $(\mathbf{x},\mathbf{x}^+)$ associated with $k$ negative samples of $\mathbf{x}$, denoted as $\{\mathbf{x}_1^-, \mathbf{x}_2^-, \cdots,\mathbf{x}_k^-\}$, InfoNCE loss could be written as
\begin{align}
    L_k = -\log{(\frac{g(\mathbf{x},\mathbf{x}^+)}{g(\mathbf{x},\mathbf{x}^+)+\sum\limits_{i=1}^kg(\mathbf{x},\mathbf{x}_i^-)})}.
\end{align}
Here $g(\cdot)$ is the density ratio with the property that $ g(\mathbf{a},\mathbf{b}) \propto \frac{p(\mathbf{a}|\mathbf{b})}{p(\mathbf{a})}$, 
where $\propto$ stands for \textit{proportional to}. It has been shown by \cite{oord2019representation} that $-L_k$ actually serves as the lower bound of the mutual information $I(\mathbf{x};\mathbf{x}^+)$ with
\begin{align}
     I(\mathbf{x};\mathbf{x}^+)\geq \log{(k)}-L_k.
\end{align}
\subsection{Graph Contrastive Learning}
We build the proposed method upon the framework of SimCLR \cite{chen2020simple},
which is also the basic framework that GCA \cite{Zhu_2021} and GraphCL \cite{you2020graph} are built on.
\subsubsection{Node-level Contrastive Learning}
Given a graph $G$, two views of the graph $G_1=\{\mathbf{A}_1,\mathbf{X}_1\}$ and $G_2=\{\mathbf{A}_2,\mathbf{X}_2\}$ are first generated. This step can be treated as the data augmentation on the original graph, and various augmentation methods can be used herein. We use random edge dropping and feature masking as GCA does. The node embedding matrix for each graph can be computed as $\mathbf{H}_1=f(\mathbf{A}_1,\mathbf{X}_1)$ and $\mathbf{H}_2=f(\mathbf{A}_2,\mathbf{X}_2)$. The corresponding node pairs in two graph views are the positive pairs and all other node pairs are negative. Define $\theta(\mathbf{u},\mathbf{v})$ to be the similarity function between vectors $\mathbf{u}$ and $\mathbf{v}$, in practice, it is usually chosen as the cosine similarity on the projected embedding of each vector, using a two-layer neural network as the projection head. Denote $\mathbf{u}_i=\mathbf{H}_1[i,:]$ and $\mathbf{v}_i=\mathbf{H}_2[i,:]$, the contrastive loss is defined as
\begin{align}
\label{eq:contra}
L_{\text{con}}(G_1,G_2) &= \frac{1}{2n}\sum_{i=1}^n(l(\mathbf{u}_i,\mathbf{v}_i)+l(\mathbf{v}_i,\mathbf{u}_i)),\\
l(\mathbf{u}_i,\mathbf{v}_i) &=
-\log{\frac{e^{\theta(\mathbf{u}_i,\mathbf{v}_i)/\tau}}{e^{\theta(\mathbf{u}_i,\mathbf{v}_i)/\tau} +\sum\limits_{j\neq i}e^{\theta(\mathbf{u}_i,\mathbf{v}_j)/\tau}+\sum\limits_{j\neq i}e^{\theta(\mathbf{u}_i,\mathbf{u}_j)/\tau}}},
\end{align}
where $\tau$ is a temperature parameter.
$l(\mathbf{v}_i,\mathbf{u}_i)$ is symmetrically defined by exchanging the variables in $l(\mathbf{u}_i,\mathbf{v}_i)$. This loss is basically a variant of InfoNCE loss which is symmetrically defined instead.

\subsubsection{Graph-level Contrastive Learning}
The graph-level contrastive learning is closer to contrastive learning in the visual domain. For a batch of graphs $\mathcal{B}=\{G_1,\cdots, G_b\}$, we obtain the augmentation of each graph as $\mathcal{B}^+=\{G_1^+,\cdots,G_b^+\}$ through node dropping, subgraph sampling, edge perturbation, and feature masking as in GraphCL \cite{you2020graph}, the loss function is thus defined on these two batches of graphs as  
\begin{equation}
    L_{\text{con}}(\mathcal{B},\mathcal{B}^+) = \mathbb{E}[-\log{\frac{e^{\theta(\mathbf{R}_i, \mathbf{R}_i^+)/\tau}}{e^{\theta(\mathbf{R}_i, \mathbf{R}_i^+)/\tau}+\sum\limits_{j\neq i}e^{\theta(\mathbf{R}_i, \mathbf{R}_j^+)/\tau}+\sum\limits_{j\neq i}e^{\theta(\mathbf{R}_i, \mathbf{R}_j)/\tau}}}],
\end{equation}
where $\mathbf{R}_i = R(\mathbf{H}_i)$ and $\mathbf{R}_i^+ = R(\mathbf{H}_i^+)$.

By abuse of notation, we also use $L_{\text{con}}$ to denote the loss function for the graph-level contrastive learning, the actual meaning of $L_{\text{con}}$ is dependent on the input type, graph or set, in the following text. 

Specifically, we notice that a set of graphs with $G_i=\{\mathbf{A}_i, \mathbf{X}_i\}$ can be combined into one graph as 
\begin{equation}
G^*=\{\text{block\_diag}(\mathbf{A}_1,\cdots,\mathbf{A}_b), \text{Concat}(\mathbf{X}_1, \cdots,\mathbf{X}_b)\}.
\end{equation}
Under this transformation, graph embedding of $G_i$ can be treated as the embedding of a super-node in $G^*$. This observation helps us bridge the gap between node-level contrastive learning and graph-level contrastive learning, where the only difference between them is the granularity of the instance in the contrastive learning loss. Therefore, we can build a universal framework for graph contrastive learning which can be used for both node-level and graph-level downstream tasks. 

\subsubsection{Graph Encoder}
In principle, our framework could be applied on any graph neural network (GNN) architecture for encoder as long as it could be attacked. For simplicity,
%\by{for simplicity}
we employ a two-layer Graph Convolutional Network (GCN) \cite{kipf2017semisupervised} for node-level contrastive learning and a three-layer Graph Isomorphism Network (GIN) \cite{https://doi.org/10.48550/arxiv.1810.00826} for graph-level contrastive learning in this work. Define the symmetrically normalized adjacency matrix
\begin{align}
\hat{\mathbf{A}} =  \Tilde{\mathbf{D}}^{-\frac{1}{2}}\Tilde{\mathbf{A}}\Tilde{\mathbf{D}}^{-\frac{1}{2}}, 
\end{align}
where $\Tilde{\mathbf{A}} = \mathbf{A} +\mathbf{I}_n$ is the adjacency matrix with self-connections added and $\mathbf{I}_n$ is the identity matrix, $\Tilde{\mathbf{D}}$ is the diagonal degree matrix of $\Tilde{\mathbf{A}}$ with $\Tilde{\mathbf{D}}[i,i] = \sum_j\Tilde{\mathbf{A}}[i,j]$.   
The two-layer GCN is given as 
\begin{align}
    f(\mathbf{A},\mathbf{X}) = \sigma(\hat{\mathbf{A}}\sigma(\hat{\mathbf{A}}\mathbf{X}\mathbf{W}^{(1)})\mathbf{W}^{(2)}),
\end{align}
where $\mathbf{W}^{(1)}$ and $\mathbf{W}^{(2)}$ are the weights of the first and second layer respectively, $\sigma(\cdot)$ is the activation function.   

The Graph Isomorphism operator could be defined as 
\begin{align}
    \mathbf{X}' = h((\mathbf{A}+\epsilon \mathbf{I})\mathbf{X}),
\end{align}
where $h(\cdot)$ is a neural network such as multi-layer perceptrons (MLPs) and $\epsilon$ is a non-negative scalar. A three-layer GIN is the stack of three Graph Isomorphism operators. In this work, $h(\cdot)$ is a two-layer MLP followed by an activation function and Batch Normalization \cite{https://doi.org/10.48550/arxiv.1502.03167}, and $\epsilon$ is set as $0$ for all operators. Use $\mathbf{X}^{(i)}$ to denote the node embeddings after the $i$-th operator, the final node embeddings are the concatenation of $\mathbf{X}^{(i)}$, $\mathbf{H}=\text{Concat}(\mathbf{X}^{(i)}|i=1,2,3)$, and the graph embedding is the concatenation of the node embeddings after mean-pooling, $R(\mathbf{H})=\text{Concat}(\text{Mean}(\mathbf{X}^{(i)})|i=1,2,3)$.

\subsection{Projected Gradient Descent Attack}
Projected Gradient Descent (PGD) attack \cite{madry2019deep} is an iterative attack method that projects the perturbation onto the ball of interest at the end of each iteration. Assuming that the loss $L(\cdot)$ is a function of the input matrix $\mathbf{Z} \in \mathbb{R}^{n\times d}$, at $t$-th iteration, the perturbation matrix $\mathbf{\Delta}_t\in\mathbb{R}^{n\times d}$ under an $l_{\infty}$-norm constraint could be written as
\begin{align}
    \mathbf{\Delta}_{t} = \Pi_{\|\mathbf{\Delta}\|_{\infty}\leq \delta} (\mathbf{\Delta}_{t-1}+ \eta\cdot \text{sgn}(\nabla_{\mathbf{\Delta}}L(\mathbf{Z}+\mathbf{\Delta}_{t-1})),
\end{align}
where $\eta$ is the step size, $\text{sgn}(\cdot)$ takes the sign of each element in the input matrix, and $\Pi_{\|\mathbf{\Delta}\|_{\infty}\leq \delta}$ projects the perturbation onto the $\delta$-ball in the $l_{\infty}$-norm. 

\section{Method}
In this section, we will first investigate the vulnerability of the graph contrastive learning, then we will spend the remaining section discussing each part of \ariel\ in detail. Based on the connection we build upon the node-level contrastive learning and graph-level contrastive learning, we will illustrate our method from the perspective of node-level contrastive learning and extend it to the graph level.
\subsection{Vulnerability of the Graph Contrastive Learning}
Many GNNs are known to be vulnerable to adversarial attacks \cite{bojchevski2019adversarial, Z_gner_2018}, so we first investigate the vulnerability of the graph neural networks trained with the contrastive learning objective in Equation (\ref{eq:contra}). We generate a sequence of $60$ 
%\by{60? maybe we should use a letter?} 
graphs by iteratively dropping edges and masking the features. Let $G_0=G$, for the $t$-th iteration, we generate $G_t$ from $G_{t-1}$ by randomly dropping the edges in $G_{t-1}$ and randomly masking the unmasked features, both with probability $p=0.03$. %\by{we should use a letter such as $p$, and $p=0.03$ should be moved to the implementation details.}. 
Since $G_t$ is guaranteed to contain less information than $G_{t-1}$, $G_t$ should be less similar to $G_0$ than $G_{t-1}$, on both the graph and node level. Denote the node embeddings of $G_t$ as $\mathbf{H}_t$, we measure the similarity $\theta(\mathbf{H}_t[i,:], \mathbf{H}_0[i,:])$, and it is expected that the similarity decreases as the iteration goes on. 
%\by{This paragraph can be a general description. In the next paragraph, you can give an example with specific numbers such as $p=0.03$ etc.}

We generate the sequences on two datasets, \textit{Amazon-Computers} and \textit{Amazon-Photo} \cite{shchur2019pitfalls}, and the results are shown in Figure \ref{fig:simi1}.
\begin{figure}[h]
    \centering
    \includegraphics[width=.5\linewidth]{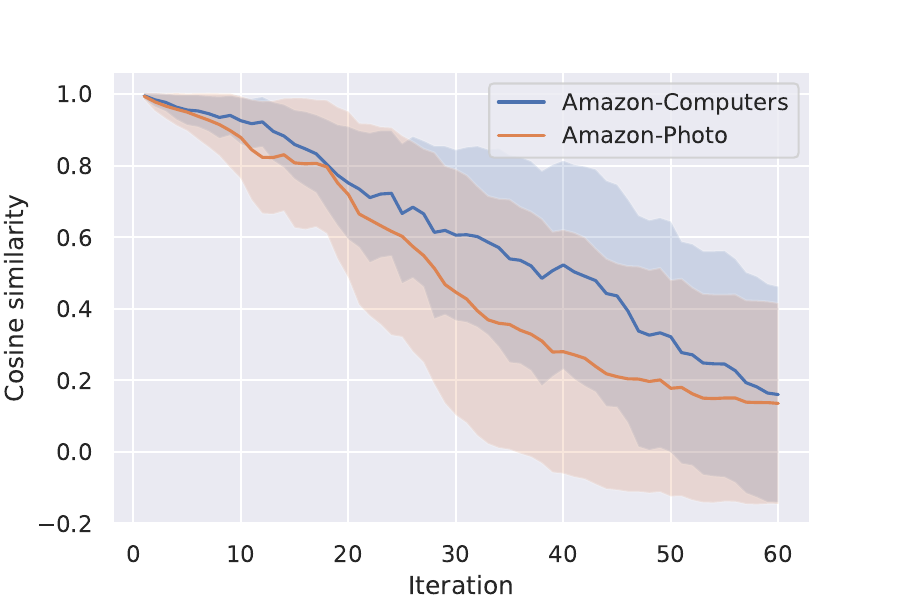}
    \caption{Average cosine similarity between the node embeddings of the original graph and the perturbed graph, results are on datasets Amazon-Computers and Amazon-Photo. The shaded area represents the standard deviation.}
    \label{fig:simi1}
\end{figure}
At the $30$-th iteration, with $0.97^{30}=40.10\%$ edges and features left, the average similarity of the positive samples are under $0.5$ on Amazon-Photo. At the $60$-th iteration, with $0.97^{60}=16.08\%$ edges and features left, the average similarity drops under $0.2$ on both Amazon-Computers and Amazon-Photo. Additionally, starting from the $30$-th iteration, the cosine similarity has around $0.3$ standard deviation for both datasets, which indicates that a lot of nodes are actually very sensitive to the external perturbations, even if we do not add any adversarial component but just mask out some information. These results demonstrate that the current graph contrastive learning framework is not trained over enough high-quality contrastive samples and is not robust to adversarial attacks.

Given this observation, we are motivated to build an adversarial graph contrastive learning framework that could improve the performance and robustness of the previous graph contrastive learning methods. The overview of our framework is shown in Figure \ref{fig:architecture}.
\begin{figure*}
    \centering
    \includegraphics[width=0.95\textwidth]{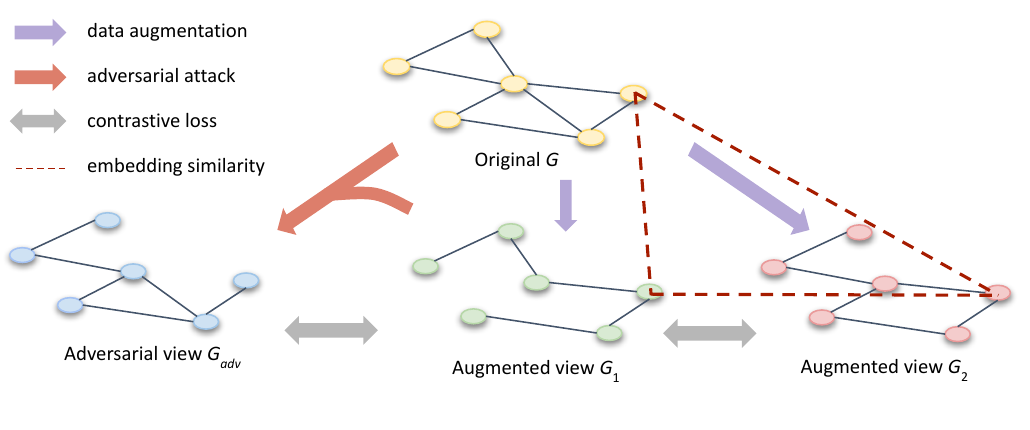}
    \caption{The overview of the proposed \ariel\ framework. For each iteration, two augmented views are generated from the original graph by data augmentation (purple arrows), and then an adversarial view is generated (red arrow) from the original graph by maximizing the contrastive loss against one of the augmented views. Besides, the similarities of the corresponding nodes (dashed lines) will get penalized by the information regularization if they exceed the estimated upper bound. The objective of \ariel\ is to minimize the contrastive loss (grey arrows) between the augmented views, the adversarial view, and the corresponding augmented view, and the information regularization. Best viewed in color.}
    \label{fig:architecture}
\vspace{-10pt}
\end{figure*}
\subsection{Adversarial Training}
Adversarial training uses the samples generated through the adversarial attack methods to improve the generalization ability and robustness of the original method during training. Although most existing attack frameworks are targeted at supervised learning, it is natural to generalize these methods to contrastive learning by replacing the classification loss with the contrastive loss. The goal of the adversarial attack on graph contrastive learning is to maximize the contrastive loss by adding a small perturbation on the contrastive samples, which can be formulated as 
\begin{align}
    \label{eq:adv_sample}
    G_{\text{adv}} = \arg\max_{G'} L_{\text{con}}(G_1, G'), 
\end{align}
where $G'=\{\mathbf{A}', \mathbf{X}'\}$ is generated from the original graph $G$, and the change is constrained by the budget $\Delta_{\mathbf{A}}$ and $\Delta_{\mathbf{X}}$ as
\begin{align}
    &\sum_{i,j}|\mathbf{A}'[i,j]-\mathbf{A}[i,j]|\leq\Delta_{\mathbf{A}},\\
    &\sum_{i,j}|\mathbf{X}'[i,j]-\mathbf{X}[i,j]|\leq\Delta_{\mathbf{X}}.
\end{align}
We treat adversarial attacks as one kind of data augmentation. Although we find it effective to make the adversarial attack on one or two augmented views as well, we follow the typical contrastive learning procedure as in SimCLR \cite{chen2020simple} to make the attack on the original graph in this work. Besides, it does not matter whether $G_1$, $G_2$ or $G$ is chosen as the anchor for the adversary, each choice works in our framework and it can also be sampled as a third view. In our experiments, we use PGD attack \cite{madry2019deep} as our attack method.

We generally follow the method proposed by Xu et al. \cite{xu2019topology} to make the PGD attack on the graph structure and apply the regular PGD attack method on the node features. Define the supplement of the adjacency matrix as $\mathbf{\bar{A}}=\mathbf{1}_{n\times n}-\mathbf{I}_n-\mathbf{A}$, where $\mathbf{1}_{n\times n}$ is the ones matrix of size $n\times n$. The perturbed adjacency matrix can be written as
\begin{align}
    \mathbf{A}_{\text{adv}} &= \mathbf{A}+ \mathbf{C}\circ \mathbf{L}_{\mathbf{A}},\\
    \mathbf{C} &= \bar{\mathbf{A}} - \mathbf{A},
\end{align}
where $\circ$ is the element-wise product, and $\mathbf{L}_{\mathbf{A}}\in\{0,1\}^{n\times n}$ is a symmetric matrix with each element $\mathbf{L}_{\mathbf{A}}[i,j]$ corresponding to the modification (e.g., add, delete or no modification) of the edge between the node pair $(v_i,v_j)$. The perturbation on $\mathbf{X}$ follows the regular PGD attack procedure and the perturbed feature matrix can be written as 
\begin{align}
    \mathbf{X}_{\text{adv}} = \mathbf{X} + \mathbf{L}_{\mathbf{X}},
\end{align}
where $\mathbf{L}_{\mathbf{X}}\in\mathbb{R}^{n\times d}$ is the perturbation on the feature matrix.

For the ease of optimization, $\mathbf{L}_{\mathbf{A}}$ is relaxed to its convex hull $\Tilde{\mathbf{L}}_{\mathbf{A}}\in[0,1]^{n\times n}$, which satisfies $\mathcal{S}_{\mathbf{A}}=\{\Tilde{\mathbf{L}}_{\mathbf{A}}|\sum\limits_{i,j}\Tilde{\mathbf{L}}_{\mathbf{A}}\leq \Delta_{\mathbf{A}}, \Tilde{\mathbf{L}}_{\mathbf{A}}\in[0,1]^{n\times n}\}$. The constraint on $\mathbf{L}_{\mathbf{X}}$ can be written as $\mathcal{S}_{\mathbf{X}}=\{\mathbf{L}_{\mathbf{X}}|\| \mathbf{L}_{\mathbf{X}}\|_{\infty}\leq\delta_{\mathbf{X}}, \mathbf{L}_{\mathbf{X}}\in\mathbb{R}^{n\times d}\}$, where we directly treat $\delta_{\mathbf{X}}$ as the constraint on the feature perturbation. In each iteration, we make the updates
\begin{align}
    \label{eq:pgd_a}
    \Tilde{\mathbf{L}}_{\mathbf{A}}^{(t)} &= \Pi_{\mathcal{S}_{\mathbf{A}}}[\Tilde{\mathbf{L}}_{\mathbf{A}}^{(t-1)}+\alpha\cdot\mathbf{G}_{\mathbf{A}}^{(t)}],\\
    \label{eq:pgd_x}
    \mathbf{L}_{\mathbf{X}}^{(t)} & =\Pi_{\mathcal{S}_{\mathbf{X}}}[\mathbf{L}_{\mathbf{X}}^{(t-1)}+\beta\cdot\text{sgn}(\mathbf{G}_{\mathbf{X}}^{(t)})],
\end{align}
where $t$ denotes the current number of iterations, and 
\begin{align}
\mathbf{G}_{\mathbf{A}}^{(t)}&=\nabla_{\Tilde{\mathbf{L}}_{\mathbf{A}}}   L_{\text{con}}(G_1,G_{\text{adv}}^{(t-1)}),\\ \mathbf{G}_{\mathbf{X}}^{(t)}&=\nabla_{\mathbf{L}_{\mathbf{X}}}  L_{\text{con}}(G_1,G_{\text{adv}}^{(t-1)}),
\end{align}
denote the gradients of the loss with respect to $\Tilde{\mathbf{L}}_{\mathbf{A}}$ at $\Tilde{\mathbf{L}}_{\mathbf{A}}^{(t-1)}$ and $\mathbf{L}_{\mathbf{X}}$ at $\mathbf{L}_{\mathbf{X}}^{(t-1)}$ respectively. Here $G_{\text{adv}}^{(t-1)}$ is defined as $\{\mathbf{A}+ \mathbf{C}\circ \Tilde{\mathbf{L}}_{\mathbf{A}}^{(t-1)},  \mathbf{X} + \mathbf{L}_{\mathbf{X}}^{(t-1)}\}$.
The projection operation $\Pi_{\mathcal{S}_{\mathbf{X}}}$  simply clips $\mathbf{L}_{\mathbf{X}}$ into the range $[-\delta_{\mathbf{X}}, \delta_{\mathbf{X}}]$ elementwisely. The projection operation  $\Pi_{\mathcal{S}_{\mathbf{A}}}$ is calculated as  
\begin{align}
\label{eq:proj}
    \Pi_{\mathcal{S}_{\mathbf{A}}}(\mathbf{Z}) = \begin{cases}
    P_{[0,1]}[\mathbf{Z}-\mu \mathbf{1}_{n\times n}], & \text{if } \mu>0, \text{and } \sum\limits_{i,j}{P_{[0,1]}[\mathbf{Z}-\mu \mathbf{1}_{n\times n}]}=\Delta_{\mathbf{A}},\\
    \\
     P_{[0,1]}[\mathbf{Z}], & \text{if } \sum\limits_{i,j}{P_{[0,1]}[\mathbf{Z}]}\leq\Delta_{\mathbf{A}},
    \end{cases}
\end{align}
where $P_{[0,1]}[\mathbf{Z}]$ clips $\mathbf{Z}$ into the range $[0,1]$. We use the bisection method \cite{boyd_vandenberghe_2004} to solve the equation $\sum\limits_{i,j} P_{[0,1]}[\mathbf{Z}-\mu \mathbf{1}_{n\times n}]=\Delta_{\mathbf{A}}$ with respect to the dual variable $\mu$.

To finally obtain $\mathbf{L}_{\mathbf{A}}$ from $\Tilde{\mathbf{L}}_{\mathbf{A}}$, each element is independently sampled from a Bernoulli distribution as $ \mathbf{L}_{\mathbf{A}}[i,j]\sim \text{Bernoulli}( \Tilde{\mathbf{L}}_{\mathbf{A}}[i,j])$. To obtain a symmetric matrix, we only sample the upper triangular part (the elements on the diagonal are known to be $0$ in our formulation) and obtain the lower triangular part through transposition.

\subsection{Adversarial Graph Contrastive Learning}
To assimilate the graph contrastive learning and adversarial training together, we treat the adversarial view $G_{\text{adv}}$ obtained from Equation (\ref{eq:adv_sample}) as another view of the graph. We define the adversarial contrastive loss as the contrastive loss between $G_1$ and $G_{\text{adv}}$. The adversarial contrastive loss is added to the original contrastive loss in Equation (\ref{eq:contra}), which becomes
\begin{align}\label{eq:obj}
L(G_1,G_2, G_{\text{adv}}) = L_{\text{con}}(G_1,G_2)+\epsilon_1 L_{\text{con}}(G_1,G_{\text{adv}}),
\end{align}
where $\epsilon_1 >0$ is the adversarial contrastive loss coefficient. We further adopt two additional subtleties on top of this basic framework: subgraph sampling and curriculum learning.  For each iteration, a subgraph $G_s$ with a fixed size is first sampled from the original graph $G$, then the data augmentation and adversarial attack are both conducted on this subgraph. The subgraph sampling could avoid the gradient derivation on the whole graph, which will lead to heavy computation on a large network. Besides, we also observe that subgraph sampling could increase the randomness of the sample and sometimes boost the performance. To avoid the imbalanced sample on the isolated nodes, we uniformly sample a random set of nodes and then construct the subgraph atop them. For every $T$ epochs, the adversarial contrastive loss coefficient is multiplied by a weight $\gamma$. When $\gamma>1$, the portion of the adversarial contrastive loss is gradually increasing and the contrastive learning becomes harder as the training goes on.

\subsection{Information Regularization}
Adversarial training could effectively improve the model's robustness to the perturbations, nonetheless, we find these hard training samples could impose the additional risk of training collapsing, i.e., the model will be located at a bad parameter area at the early stage of the training, assigning higher probability to a highly perturbed sample than a mildly perturbed one. In our experiment, we find this vanilla adversarial training method may fail to converge in some cases (e.g., Amazon-Photo dataset). To stabilize the training, we add one constraint termed {\em information regularization}, whose main goal is to regularize the instance similarity in the feature space.

The data processing  inequality \cite{Cover2006} states that for three random variables $\mathbf{Z}_1$, $\mathbf{Z}_2$ and $\mathbf{Z}_3\in\mathbb{R}^{n\times d'}$, if they satisfy the Markov relation $\mathbf{Z_1}\rightarrow\mathbf{Z}_2\rightarrow\mathbf{Z}_3$, then the inequality $I(\mathbf{Z}_1;\mathbf{Z}_3)\leq I(\mathbf{Z}_1;\mathbf{Z}_2)$ holds. As proved by Zhu et al. \cite{Zhu_2021}, since the node embeddings of two views  $\mathbf{H}_1$ and $\mathbf{H}_2$ are conditionally independent given the node embeddings of the original graph $\mathbf{H}$, they also satisfy the Markov relation with $\mathbf{H}_1\rightarrow\mathbf{H}\rightarrow\mathbf{H}_2$, and vice versa. Therefore, we can derive the following properties over their mutual information
\begin{align}
    I(\mathbf{H}_1; \mathbf{H}_2) &\leq I(\mathbf{H};\mathbf{H}_1), \\
    I(\mathbf{H}_1; \mathbf{H}_2) &\leq I(\mathbf{H};\mathbf{H}_2).
\end{align}
In fact, this inequality holds on each node. A sketch of the proof is that the embedding of each node $v_i$ is determined by all the nodes from its $l$-hop neighborhood if an $l$-layer GNN is used as the encoder, and this subgraph composed of its $l$-hop neighborhood also satisfies the Markov relation. Therefore, we can derive the more strict inequalities 
\begin{align}
\label{info_ineq1}
    I(\mathbf{H}_1[i,:]; \mathbf{H}_2[i,:]) &\leq I(\mathbf{H}[i,:];\mathbf{H}_1[i,:]), \\
\label{info_ineq2}
     I(\mathbf{H}_1[i,:]; \mathbf{H}_2[i,:]) &\leq I(\mathbf{H}[i,:];\mathbf{H}_2[i,:]).
\end{align}
Since $-L_{\text{con}}(G_1,G_2)$ is only a lower bound of the mutual information, directly applying the above constraints is hard, we only consider the constraints on the density ratio. Using the Markov relation for each node, we give the following theorem:
\begin{theorem}
For two graph views $G_1$ and $G_2$ independently transformed from the graph $G$, the density ratio of their node embeddings $\mathbf{H}_1$ and $\mathbf{H}_2$ should satisfy $g(\mathbf{H}_2[i,:], \mathbf{H}_1[i,:])\leq g(\mathbf{H}_2[i,:], \mathbf{H}[i,:])$ and $g(\mathbf{H}_1[i,:], \mathbf{H}_2[i,:])\leq g(\mathbf{H}_1[i,:], \mathbf{H}[i,:])$, where $\mathbf{H}$ is the node embeddings of the original graph.
\end{theorem}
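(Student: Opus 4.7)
The plan is to unpack the density-ratio notation using the stated proportionality $g(\mathbf{a},\mathbf{b})\propto p(\mathbf{a}\mid\mathbf{b})/p(\mathbf{a})$, then invoke the per-node Markov relation the authors have just established, and finally compare conditional densities. Concretely, I would first observe that both $g(\mathbf{H}_2[i,:],\mathbf{H}_1[i,:])$ and $g(\mathbf{H}_2[i,:],\mathbf{H}[i,:])$ share the same denominator $p(\mathbf{H}_2[i,:])$ and the same proportionality constant (they compete for the same first argument in the InfoNCE normalizer), so the inequality reduces to $p(\mathbf{H}_2[i,:]\mid\mathbf{H}_1[i,:])\le p(\mathbf{H}_2[i,:]\mid\mathbf{H}[i,:])$. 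The symmetric claim reduces analogously.

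Next I would apply the Markov chain $\mathbf{H}_1[i,:]\to\mathbf{H}[i,:]\to\mathbf{H}_2[i,:]$ justified in the paragraph preceding the theorem via the $l$-hop receptive-field argument. Conditional independence of $\mathbf{H}_1[i,:]$ and $\mathbf{H}_2[i,:]$ given $\mathbf{H}[i,:]$ yields the marginalization identity
\begin{equation}
p(\mathbf{H}_2[i,:]\mid\mathbf{H}_1[i,:]) \;=\; \int p(\mathbf{H}_2[i,:]\mid\mathbf{h})\, p(\mathbf{h}\mid\mathbf{H}_1[i,:])\, d\mathbf{h},
\end{equation}
so the left-hand side is a $p(\mathbf{h}\mid\mathbf{H}_1[i,:])$-weighted average of values $p(\mathbf{H}_2[i,:]\mid\mathbf{h})$, with the realized node embedding $\mathbf{H}[i,:]$ being one particular $\mathbf{h}$ in the support. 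The corresponding step for the second claim swaps the roles of $\mathbf{H}_1$ and $\mathbf{H}_2$ and uses the reverse Markov chain $\mathbf{H}_2[i,:]\to\mathbf{H}[i,:]\to\mathbf{H}_1[i,:]$, which holds symmetrically because $G_1$ and $G_2$ are independently generated from $G$.

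The main obstacle is that a truly pointwise bound of this weighted average by the single value $p(\mathbf{H}_2[i,:]\mid\mathbf{H}[i,:])$ does not follow from marginalization alone: a weighted average can exceed the value at an arbitrarily chosen sample point. The natural way to close this gap is to take logs and average over the joint distribution, which recovers the standard data-processing inequality $I(\mathbf{H}_2[i,:];\mathbf{H}_1[i,:])\le I(\mathbf{H}_2[i,:];\mathbf{H}[i,:])$ already stated in Equations~(\ref{info_ineq1})--(\ref{info_ineq2}). Under the optimal density-ratio estimator, this is exactly $\mathbb{E}[\log g(\mathbf{H}_2[i,:],\mathbf{H}_1[i,:])]\le\mathbb{E}[\log g(\mathbf{H}_2[i,:],\mathbf{H}[i,:])]$. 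I would therefore present the theorem as the per-sample (pointwise) constraint motivated by this expectation-level DPI, noting that it becomes tight in the regime the authors actually operate in — mild augmentations where $p(\mathbf{h}\mid\mathbf{H}_1[i,:])$ concentrates near the realized $\mathbf{H}[i,:]$ and the weighted average collapses to the value on the right.
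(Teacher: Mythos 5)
Your reduction of the claim to $p(\mathbf{H}_2[i,:]\mid\mathbf{H}_1[i,:])\le p(\mathbf{H}_2[i,:]\mid\mathbf{H}[i,:])$ matches the paper's starting point, but you do not actually establish that inequality: you correctly note that the marginalization $p(\mathbf{H}_2[i,:]\mid\mathbf{H}_1[i,:])=\int p(\mathbf{H}_2[i,:]\mid\mathbf{h})\,p(\mathbf{h}\mid\mathbf{H}_1[i,:])\,d\mathbf{h}$ only gives a weighted average, and then you retreat to the expectation-level data-processing inequality on mutual information, presenting the pointwise bound as merely ``motivated'' and approximately tight when $p(\mathbf{h}\mid\mathbf{H}_1[i,:])$ concentrates at the realized $\mathbf{H}[i,:]$. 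That is not a proof of the theorem as stated, which is a per-node, per-sample inequality on the density ratios themselves, not on their expected logarithms; the concentration assumption you appeal to is nowhere among the hypotheses, so your argument delivers a strictly weaker conclusion.

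The paper closes this exact gap by a more direct (if informal) step that you never take: it uses the per-node Markov relation to factor the conditional at the realized intermediate embedding, $p(\mathbf{H}_2[i,:]\mid\mathbf{H}_1[i,:])=p(\mathbf{H}_2[i,:]\mid\mathbf{H}[i,:])\,p(\mathbf{H}[i,:]\mid\mathbf{H}_1[i,:])$, bounds the second factor by $1$, divides both sides by $p(\mathbf{H}_2[i,:])$, and invokes $g(\mathbf{a},\mathbf{b})\propto p(\mathbf{a}\mid\mathbf{b})/p(\mathbf{a})$ (with the same proportionality constant on both sides) to obtain the pointwise claim; the second inequality follows symmetrically. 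Your objection about averages versus point evaluations is in fact aimed at the one step where the paper is implicitly treating the intermediate value as the realized $\mathbf{H}[i,:]$ (a point-mass/discrete-probability reading under which $p(\mathbf{H}[i,:]\mid\mathbf{H}_1[i,:])\le 1$ makes sense), so your criticism has mathematical merit as commentary on the paper's rigor — but as a proof attempt it leaves the stated pointwise inequality unproven, whereas the paper's argument, granting that reading, derives it in two lines.
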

\begin{proof}
Following the Markov relation of each node, we get
\begin{equation}
\begin{split}
    p(\mathbf{H}_2[i,:]|\mathbf{H}_1[i,:])&=p(\mathbf{H}_2[i,:]|\mathbf{H}[i,:])p(\mathbf{H}[i,:]|\mathbf{H}_1[i,:])\\
    &\leq p(\mathbf{H}_2[i,:]|\mathbf{H}[i,:]),
\end{split}
\end{equation}
and consequently
\begin{align}
    \frac{p(\mathbf{H}_2[i,:]|\mathbf{H}_1[i,:])}{p(\mathbf{H}_2[i,:])}\leq \frac{p(\mathbf{H}_2[i,:]|\mathbf{H}[i,:])}{p(\mathbf{H}_2[i,:])}.
\end{align}
Since $g(\mathbf{a},\mathbf{b}) \propto \frac{p(\mathbf{a}|\mathbf{b})}{p(\mathbf{a})}$, we get $g(\mathbf{H}_2[i,:], \mathbf{H}_1[i,:])\leq g(\mathbf{H}_2[i,:], \mathbf{H}[i,:])$. A similar proof applies to the other inequality.
\end{proof}
 Note that $g(\cdot,\cdot)$ is symmetric for the two inputs, we thus get two upper bounds for $g(\mathbf{H}_1[i,:], \mathbf{H}_2[i,:])$. According to the previous definition, $g(\mathbf{a},\mathbf{b})=e^{\theta(\mathbf{a},\mathbf{b})/\tau}$, we can simply replace $g(\cdot, \cdot)$ with $\theta(\cdot,\cdot)$ in the inequalities, then we combine these two upper bounds into one
\begin{align}
    2\cdot\theta(\mathbf{H}_1[i,:], \mathbf{H}_2[i,:]) \leq \theta(\mathbf{H}_2[i,:], \mathbf{H}[i,:])+\theta(\mathbf{H}_1[i,:], \mathbf{H}[i,:]).
\end{align}
This bound intuitively requires the similarity between $\mathbf{H}_1[i,:]$ and $\mathbf{H}_2[i,:]$ to be less than the similarity between $\mathbf{H}[i,:]$ and $\mathbf{H}_1[i,:]$ or $\mathbf{H}_2[i,:]$.  
Equipped with this upper bound, we define the following information regularization to penalize the higher probability of a less similar contrastive pair
\begin{align}
\label{eq:ir}
&d_i = 2\cdot\theta(\mathbf{H}_1[i,:], \mathbf{H}_2[i,:]) -(\theta(\mathbf{H}_2[i,:], \mathbf{H}[i,:])+\theta(\mathbf{H}_1[i,:], \mathbf{H}[i,:])),\\
    &L_{I}(G_1, G_2, G) = \frac{1}{n}\sum\limits_{i=1}^n{\max\{d_i, 0\}}.
\end{align}
Specifically, the information regularization could be defined over any three graphs that satisfy the Markov relation, but for our framework, to save the memory and time complexity, we avoid additional sampling and directly ground the information regularization on the existing graphs. It is also fine to apply the information regularization on $G$, $G_1$ and $G_{\text{adv}}$ or $G$, $G_2$ and $G_{\text{adv}}$.

The final loss of \ariel\ can be written as 
\begin{equation}
\label{eq:ARIEL}
     L(G_1,G_2, G_{\text{adv}}) = L_{\text{con}}(G_1,G_2)+\epsilon_1 L_{\text{con}}(G_1,G_{\text{adv}}) + \epsilon_2  L_{I}(G_1, G_2, G),
\end{equation}
where $\epsilon_2 >0$ controls the strength of the information regularization.

The entire algorithm of \ariel\ is summarized in Algorithm \ref{alg:algorithm}.
\begin{algorithm}[tb]
\caption{Algorithm of \ariel}
\label{alg:algorithm}
\begin{algorithmic}[0] %[1] enables line numbers
\STATE \textbf{Input data:} Graph $G=(\mathbf{A},\mathbf{X})$
\STATE \textbf{Input parameters:} $\alpha$, $\beta$, $\Delta_{\mathbf{A}}$, $\delta_{\mathbf{X}}$, $\epsilon_1$, $\epsilon_2$, $\gamma$ and $T$
\STATE Randomly initialize the graph encoder $f$
\FOR{iteration  $k=0,1,\cdots$}
\STATE Sample a subgraph $G_s$ from $G$
\STATE Generate two views $G_1$ and $G_2$ from $G_s$
\STATE Generate the adversarial view $G_{\text{adv}}$ according to Equations (\ref{eq:pgd_x}) and (\ref{eq:pgd_a})
\STATE Update model $f$ to minimize $L(G_1,G_2, G_{\text{adv}})$ in Equation (\ref{eq:ARIEL})
\IF{$(k+1)\mod T =0$}
\STATE Update $\epsilon_1\leftarrow \gamma*\epsilon_1$
\ENDIF
\ENDFOR
\STATE \textbf{return:} Node embedding matrix  $\mathbf{H}=f(\mathbf{A},\mathbf{X})$ 
\end{algorithmic}
\end{algorithm}

\subsection{Extension to Graph-Level Contrastive Learning}
For a batch of graphs $\mathcal{B}$ and the batch of their augmentation views $\mathcal{B}^+$, we aim to generate a batch of adversarial views, which we denote as $\mathcal{B}_{\text{adv}}$. Denote the combined graph of each batch as $G^*$, ${G^+}^*$ and $G_{\text{adv}}^*$. The objective of the adversarial graph contrastive learning on the graph level can be formulated as
\begin{align}
    \mathcal{B}_{\text{adv}} = \arg&\max_{\mathcal{B}'} L_{\text{con}}(\mathcal{B}^+, \mathcal{B}'),\\
    \text{subject to } & \sum_{i,j}|{\mathbf{A}'}^*[i,j]-{\mathbf{A}}^*[i,j]|\leq \Delta_{\mathbf{A}},\\
                       &  \sum_{i,j}|{\mathbf{X}'}^*[i,j]-{\mathbf{X}}^*[i,j]|\leq \Delta_{\mathbf{X}}.
\end{align}
It is worth noting that the constraints we use here are applied on the batch rather than each graph, i.e., we only constrain the total perturbations over all graphs rather than the perturbations on each graph. This can greatly reduce the computational cost in solving the above-constrained maximization problem in that it reduces the number of constraints from twice the batch size to 2. However, it also introduces the additional risk that the perturbations could be severely imbalanced among the graphs in the batch, e.g., a graph is heavily perturbed while others are almost unchanged. In our experiment, we do not observe this problem but it could theoretically happen. A good practice is to start from this simple form, and then gradually add constraints to the vulnerable graphs in the batch if the imbalanced perturbations are observed.

During the attack stage, the perturbation matrix $\mathbf{L}_{\mathbf{A}}$ and its convex hull $\Tilde{\mathbf{L}}_{\mathbf{A}}$ are further subject to the constraints that they should be block diagonal matrices with 0 at position $(i,j)$ if node $i$ and node $j$ are the nodes from two graphs in the batch. This could be easily implemented by using a block diagonal mask to zero out the gradients during the forward propagation,
\begin{align}
 \mathbf{L}_{\mathbf{A}}&=\text{block\_diag}(\mathbf{1}_{n_i\times n_i}|i=1\cdots,b)\circ\mathbf{L}_{\mathbf{A}},\\
  \tilde{\mathbf{L}}_{\mathbf{A}}&=\text{block\_diag}(\mathbf{1}_{n_i\times n_i}|i=1\cdots,b)\circ \Tilde{\mathbf{L}}_{\mathbf{A}},
\end{align}
where $n_i$ is the number of nodes in the graph $G_i$ in the batch. With this processing, the projection operation on the adjacency matrix remains the same as in Equation (\ref{eq:proj}). In case we need to apply the constraints for each graph in the batch, we just need to apply the projection operation defined in Equation (\ref{eq:proj}) on the adjacency matrix of each graph, using the bisection method to solve $\mu$ for each graph separately. The projection operation on the feature perturbation matrix is not affected on the graph level, which still clips $\mathbf{L}_{\mathbf{X}}$ into the range $[-\delta_{\mathbf{X}},\delta_{\mathbf{X}}]$ elementwisely.

Furthermore, the information regularization also applies to graph-level contrastive learning, where we only need to replace the node embedding with the graph embedding in Equation (\ref{eq:ir}). Hence, we can derive the bound atop different views of the same graph in $\mathcal{B}$, $\mathcal{B}^+$ and $\mathcal{B}_{\text{adv}}$,
\begin{align}
    &d_i = 2\cdot\theta(R(\mathbf{H}^+_i), R(\mathbf{H}_{\text{adv},i})) -(\theta(R(\mathbf{H}^+_i), R(\mathbf{H}_i))+\theta(R(\mathbf{H}_{\text{adv,i}}), R(\mathbf{H}_i))),\\
    &L_{I}(\mathcal{B}, \mathcal{B}^+, \mathcal{B}_{\text{adv}}) = \frac{1}{b}\sum\limits_{i=1}^b{\max\{d_i, 0\}}.
\end{align}

The final loss of \ariel\ for the graph-level contrastive learning could be written as
\begin{equation}
\label{eq:ARIEL1}
    L(\mathcal{B},\mathcal{B}^+, \mathcal{B}_{\text{adv}}) = L_{\text{con}}(\mathcal{B},\mathcal{B}^+)+\epsilon_1 L_{\text{con}}(\mathcal{B}^+,\mathcal{B}_{\text{adv}}) + \epsilon_2  L_{I}(\mathcal{B}, \mathcal{B}^+, \mathcal{B}_{\text{adv}}).
\end{equation}
The graph-level adversarial contrastive learning could also follow the steps outlined in Algorithm \ref{alg:algorithm} for training, by simply replacing the input graph with the input batch in loss functions.

\section{Experiments}
In this section, we conduct empirical evaluations, which are designed to answer the following three questions:
\begin{itemize}
    \item [RQ1.] How effective is the proposed \ariel\ in comparison with previous graph contrastive learning methods on the node classification and graph classification task?
    \item [RQ2.] To what extent does \ariel\ gain robustness over the attacked graph?
    \item [RQ3.] How does each part of \ariel\ contribute to its performance?
\end{itemize}
We evaluate our method with the node classification task and graph classification task on the real-world graphs and further evaluate the robustness of it with the node classification task on the attacked graphs. The node/graph embeddings are first learned by the proposed \ariel\ algorithm, then the embeddings are fixed to perform the classification with a simple classifier trained over it. All our experiments are conducted on the NVIDIA Tesla V100S GPU with 32G memory.

\subsection{Experimental Setup}

\subsubsection{Datasets} For the node-level contrastive learning, we use eight datasets for the evaluation, including \textit{Cora}, \textit{CiteSeer}, \textit{Amazon-Computers}, \textit{Amazon-Photo}, \textit{Coauthor-CS}, \textit{Coauthor-Physics}, \textit{Facebook} and \textit{LastFM Asia}.
Cora and CiteSeer \cite{yang2016revisiting} are citation networks, where nodes represent documents and edges correspond to citations. Amazon-Computers and Amazon-Photo \cite{shchur2019pitfalls} are extracted from the Amazon co-purchase graph. In these graphs, nodes are the goods and they are connected by an edge if they are frequently bought together. Coauthor-CS and Coauthor-Physics \cite{shchur2019pitfalls} are the co-authorship graphs, where each node is an author and the edge indicates the co-authorship on a paper. Facebook \cite{https://doi.org/10.48550/arxiv.1909.13021} is a page-page graph of verified Facebook pages where edges correspond to the likes of each other. LastFM Asia \cite{https://doi.org/10.48550/arxiv.2005.07959} is a social network of Asian users, each node represents a user and they are connected via friendship.
 
For the graph-level contrastive learning, we evaluate \ariel\ on four datasets from the benchmark TUDataset \cite{https://doi.org/10.48550/arxiv.2007.08663}, including the biochemical molecules graphs NCI1, PROTEINS, DD and MUTAG.
 
A summary of the datasets\footnote{All the datasets are from Pytorch Geometric 2.0.4: \url{https://pytorch-geometric.readthedocs.io/en/latest/modules/datasets.html}} statistics is in Table \ref{tab:dataset} and Table \ref{tab:dataset1}.
\begin{table}
\centering
\begin{tabular}{lcccc}
\toprule
Dataset & Nodes & Edges & Features & Classes \\
\midrule
Cora       & 2,708  & 5,429 &1,433 &7    \\
CiteSeer    &3,327  & 4,732 &3,703 &6     \\
Amazon-Computers & 13,752 & 245,861&767&10\\
Amazon-Photo & 7,650 &119,081&745&8\\
Coauthor-CS & 18,333&81,894&6,805&15\\
Coauthor-Physics&34,493&247,962&8,415&5\\
Facebook & 22,470 &342,004 &128 & 4\\
LastFM Asia & 7,624 &55,612  &128&18\\ 
\bottomrule
\end{tabular}
\vspace{5pt}
\caption{Node-level contrastive learning datasets statistics, the number of nodes, edges, node feature dimensions, and classes are listed.}
\label{tab:dataset}
\vspace{-10pt}
\end{table}

\begin{table}
\centering
\begin{tabular}{lccccc}
\toprule
Dataset & Graphs &  Nodes &  Degree & Features & Classes \\
\midrule
NCI1    & 4110  & 29.87  & 1.08 &37 &2   \\
PROTEINS& 1113    & 39.06  & 1.86 &3 &2     \\
DD & 1178&284.32 & 2.52&89&2\\
MUTAG & 188 & 17.93 &1.10&7&2\\
\bottomrule
\end{tabular}
\vspace{5pt}
\caption{Graph-level contrastive learning datasets statistics, number of graphs, the average number of nodes and degree, and the number of node feature dimensions and classes are listed.}
\label{tab:dataset1}
\vspace{-20pt}
\end{table}

\subsubsection{Baselines} We consider seven graph contrastive learning methods for node-level contrastive learning, including DeepWalk \cite{Perozzi:2014:DOL:2623330.2623732},  DGI \cite{velickovic2018deep},  Robust DGI (abbreviated as RDGI) \cite{Xu2020UnsupervisedAR}, GMI \cite{peng2020graph}, MVGRL \cite{hassani2020contrastive}, GRACE \cite{zhu2020deep} and GCA \cite{Zhu_2021}. Since DeepWalk only generates the embeddings for the graph topology, we concatenate the node features to the generated embeddings for evaluation so that the final embeddings can incorporate both topology and attribute information. Besides, we also compare our method with two supervised methods Graph Convolutional Network (GCN) \cite{kipf2017semisupervised} and Graph Attention Network (GAT) \cite{velickovic2018graph}. 

For graph-level contrastive learning, we compare \ariel\ with the state-of-the-art graph kernel methods including graphlet kernel (GL), Weisfeiler-Lehman sub-tree kernel (WL) and deep graph kernel (DGK), and recent unsupervised graph representation learning methods including node2vec \cite{grover2016node2vec},
sub2vec \cite{Adhikari2018Sub2VecFL}, graph2vec \cite{https://doi.org/10.48550/arxiv.1707.05005}, InfoGraph \cite{https://doi.org/10.48550/arxiv.1908.01000} and GraphCL \cite{you2020graph}.

\subsubsection{Evaluation protocol} For each dataset, we randomly select $10\%$ nodes/graphs for training, $10\%$ nodes/graphs for validation, and the remaining for testing. For contrastive learning methods, a logistic regression classifier is trained to do the node classification over the node embeddings while a support vector machine is trained to do the graph classification over the graph embeddings. The accuracy is used as the evaluation metric. 

For node-level contrastive learning, we search each method over $6$ different random seeds, including  $5$ random seeds from our own and the best random seed of GCA on each dataset. For each seed, we evaluate the method on $20$ random training-validation-testing dataset splits and report the mean and the standard deviation of the accuracy on the best seed. Specifically, for the supervised learning methods, we abandon the existing splits, for example on Cora and CiteSeer, but instead do a random split before the training and report the results over $20$ splits.

For graph-level contrastive learning, we keep the evaluation protocol the same as the setting in \cite{https://doi.org/10.48550/arxiv.1908.01000} and \cite{you2020graph}, where the experiments are conducted on 5 random seeds, each corresponding to a 10-fold evaluation.

Besides testing on the original, clean graphs, we also evaluate our method on the attacked graphs for node-level contrastive learning. We use Metattack \cite{zugner2018adversarial} to perform the poisoning attack. 
Since Metattack is targeted at graph structure only and computationally inefficient on large graphs, we first randomly sample a subgraph of $5000$ nodes if the number of nodes in the original graph is greater than $5000$, then we randomly mask out $20\%$ node features and finally use Metattack to perturb $20\%$ edges to generate the final attacked graph.
For \ariel, we use the hyperparameters of the best models we obtain on the clean graphs for evaluation. For GCA, we report the performance in our main results for its three variants, GCA-DE, GCA-PR, and GCA-EV, which correspond to the adoption of degree, eigenvector, and PageRank \cite{ilprints422, kang2018aurora} centrality measures,  and use the best variant on each dataset for the evaluation on the attacked graphs.
\subsection{Hyperparameters}
For node-level contrastive learning, we use the same parameters and design choices for \ariel's network
architecture, optimizer and training scheme as in GRACE and GCA on each dataset. However, we find GCA not behave well on Cora with a significant performance drop, so we re-search the parameters for GCA on Cora separately and use a different temperature for it. Other contrastive learning-specific parameters are kept the same over GRACE, GCA and \ariel.  
On graph-level contrastive learning, we keep \ariel's hyperparameters the same as the ones used by GraphCL except for its own parameters.

All GNN-based baselines on node-contrastive learning use a two-layer GCN as the encoder. For each method, we compare its default hyperparameters and the ones used by \ariel\ and use the hyperparameters leading to better performance. Other algorithm-specific hyperparameters all respect the default setting in its official implementation. For graph-level contrastive learning, \ariel\ uses a three-layer GIN as the encoder, and we take the results for each baseline from its original paper under the same experimental setting. 

Other hyperparameters of \ariel\ are summarized  as follows:
\begin{itemize}
    \item Adversarial contrastive loss coefficient $\epsilon_1$ and information regularization strength $\epsilon_2$. We search them over $\{0.5, 1, 1.5, 2\}$ and use the one with the best performance on the validation set 
    of each dataset. Specifically, we first fix $\epsilon_2$ as $0$ and decide the optimal value for all other parameters, then we search $\epsilon_2$ on top of the model with other hyperparameters fixed.
    \item Number of attack steps and perturbation constraints. These parameters are fixed on all datasets. For node-level contrastive learning, we set the number of attack steps 5, edge perturbation constraint $\Delta_{\mathbf{A}}=0.1\sum_{i,j}\mathbf{A}[i,j]$ and feature perturbation constraint $\delta_{\mathbf{X}}=0.5$. For graph-level contrastive learning, we set the number of attack steps 5, edge perturbation constraint $\Delta_{\mathbf{A}}=0.05\sum_{i,j}\mathbf{A}[i,j]$ and feature perturbation constraint $\delta_{\mathbf{X}}=0.04$.
    \item Curriculum learning weight $\gamma$ and change period $T$. In our experiments, we simply fix $\gamma=1.1$ and $T=20$ for node-level contrastive learning and $\gamma=1$ for graph-level contrastive learning.
    \item Graph perturbation rate $\alpha$ and feature perturbation rate $\beta$. We search both of them over $\{0.001, 0.01, 0.1\}$ and take the best one on the validation set of each dataset.
    \item Subgraph size. On node-level contrastive learning, we keep the subgraph size $500$ for \ariel\ on all datasets except Facebook and LastFM Asia, where we use a subgraph size $3000$. We do not do the subgraph sampling on graph-level contrastive learning. Instead, we control the batch size $b$, where we fix $b=32$ for DD and $b=128$ for the other three datasets.
\end{itemize}

\subsection{Main Results}
The comparison results of node classification on all eight datasets are summarized in Table \ref{tab:results}. 
Our method \ariel\ outperforms baselines over all datasets except on Cora and Facebook, with only $0.11\%$ and $0.40\%$ lower in accuracy than MVGRL. It can be seen that the previous state-of-the-art method GCA does not bear significant improvements over previous methods. In contrast, \ariel\ can achieve consistent improvements over GRACE and GCA on all datasets, especially on Amazon-Computers with almost $3\%$ gain. 
\begin{table}[h]
\centering
\resizebox{\textwidth}{!}{%
\begin{tabular}{ccccccccc}
\toprule
\textbf{Method}  & \textbf{Cora} & \textbf{CiteSeer} & \textbf{\makecell{Amazon-\\Computers}} & \textbf{\makecell{Amazon-\\Photo}} & \textbf{\makecell{Coauthor-\\CS}} & \textbf{\makecell{Coauthor-\\Physics}}&\textbf{Facebook} &\textbf{\makecell{LastFM \\Asia}} \\
\midrule
GCN &$84.14\pm0.68$&$69.02\pm0.94$&$88.03\pm1.41$&$92.65\pm0.71$&$92.77\pm0.19$&$95.76\pm0.11$ &89.98 $\pm$ 0.26 & 83.96 $\pm$ 0.47\\
GAT&$83.18\pm1.17$&$69.48\pm1.04$&$85.52\pm2.05$&$91.35\pm1.70$&$90.47\pm0.35$&$94.82\pm 0.21$&89.97 $\pm$ 0.39 & 83.04 $\pm$ 0.39 \\
\midrule
DeepWalk&$79.82\pm0.85$&$67.14\pm0.81$&$86.23\pm0.37$&$90.45\pm0.45$&$85.02\pm0.44$&$94.57\pm0.20$& 86.67 $\pm$ 0.22 & 83.93 $\pm$ 0.61\\
DGI     &$84.24\pm0.75$   & $69.12\pm1.29$ & $88.78\pm0.43$   & $92.57\pm0.23$&$92.26\pm0.12$&$95.38\pm0.07$& 89.80 $\pm$ 0.27 & 82.88 $\pm$ 0.52\\
RDGI & $81.84\pm1.07$ & $65.92\pm 1.26$ & $88.07\pm 0.28$ & $92.17\pm0.27$ & OOM & OOM & OOM & $77.34\pm0.69$\\
GMI &$82.43\pm0.90$&$70.14\pm1.00$&$83.57\pm0.40$&$88.04\pm0.59$&OOM&OOM & OOM & 74.71 $\pm$ 0.70\\
MVGRL   &\textbf{84.39 $\pm$ 0.77}   &$69.85\pm1.54$& $89.02\pm0.21$&$92.92\pm0.25$ & $92.22\pm0.22$&$95.49\pm0.17$& \textbf{90.60 $\pm$ 0.28} & 83.83 $\pm$ 0.85\\

GRACE   &$83.40\pm1.08$ & $69.47\pm1.36$  &  $87.77\pm0.34$ & $92.62\pm0.25$ &$93.06\pm 0.08$ &$95.64\pm0.08$ & 88.95 $\pm$ 0.31 & 79.52 $\pm$ 0.64\\
GCA-DE &$82.57\pm0.87$& $72.11\pm0.98$ & $88.10\pm0.33$ &$92.87\pm0.27$ &$93.08\pm0.18$&$95.62\pm0.13$ &89.73 $\pm$ 0.37& 82.42 $\pm$ 0.46\\
GCA-PR &$82.54\pm0.87$& $72.16\pm0.73$ & $88.18\pm0.39$ &$92.85\pm0.34$ &$93.09\pm0.15$&$95.58\pm0.12$ &  89.68 $\pm$ 0.36  &82.44 $\pm$ 0.51\\
GCA-EV &$81.80\pm0.92$& $67.07\pm0.79$ & $87.95\pm0.43$ &$92.63\pm0.33$ &$93.06\pm0.14$&$95.64\pm0.08$ & 89.68 $\pm$ 0.38& 82.35 $\pm$ 0.46\\
\midrule
\textbf{ARIEL}  & 84.28 $\pm$ 0.96 & \textbf{72.74 $\pm$ 1.10} &\textbf{91.13 $\pm$ 0.34} &\textbf{94.01 $\pm$ 0.23}&\textbf{93.83 $\pm$ 0.14}&\textbf{95.98 $\pm$ 0.05}   & 90.20$\pm$ 0.23 &\textbf{84.04$\pm$0.44}  \\
\bottomrule
\end{tabular}}
\vspace{5pt}
\caption{Node classification accuracy in percentage on eight real-world datasets. We bold the results with the best mean accuracy. The methods above the line are the supervised ones, and the ones below the line are unsupervised. OOM stands for Out-of-Memory on our 32G GPUs.}
\label{tab:results}
\vspace{-10pt}
\end{table}

Besides, we find MVGRL a solid baseline whose performance is close to or even better than GCA on these datasets. It achieves the highest score on Cora and Facebook, and the second-highest on Amazon-Computers and Amazon-Photo. However, it does not behave well on CiteSeer, where GCA can effectively increase the score of GRACE. To sum up, previous modifications over the grounded frameworks are mostly based on specific knowledge, for example, MVGRL introduces the diffusion matrix to DGI and GCA defines the importance on the edges and features, and they cannot consistently take effect on all datasets. However, \ariel\ uses the adversarial attack to automatically construct the high-quality contrastive samples and achieves more stable performance improvements. 

In comparison with the supervised methods, \ariel\ also achieves a clear advantage over all of them. Although it would be premature to conclude that \ariel\ is more powerful than these supervised methods since they are usually tested under the specific training-testing split, these results do demonstrate that \ariel\ can indeed generate highly expressive node embeddings for the node classification task, which can achieve comparable performance to the supervised methods.

The graph classification results are summarized in Table \ref{tab:results1}.
\begin{table*}
\small
\centering
\begin{tabular}{ccccc}
\toprule
\textbf{Method}  & \textbf{NCI1} & \textbf{PROTEINS} & \textbf{DD} & \textbf{MUTAG}  \\
\midrule
GL&-&-&-& 81.66 $\pm$ 2.11\\
WL&80.11 $\pm$ 0.50&  72.92 $\pm$ 0.56 & - & 80.72 $\pm$ 3.00\\
DGK& \textbf{80.31 $\pm$ 0.46} & 73.30 $\pm$ 0.82 & - & 87.44 $\pm$ 2.72 \\
\midrule
node2vec& 54.89 $\pm$ 1.61 & 57.49 $\pm$ 3.57 & - & 72.63 $\pm$ 10.20\\
sub2vec& 52.84 $\pm$ 1.47 & 53.03 $\pm$ 5.55 & - & 61.05 $\pm$ 15.80 \\
graph2vec& 73.22 $\pm$  1.81 & 73.30 $\pm$  2.05 & - & 83.15 $\pm$  9.25\\
InfoGraph&  76.20 $\pm$ 1.06 & 74.44 $\pm$ 0.31 & 72.85 $\pm$ 1.78 & 89.01 $\pm$ 1.13 \\
GraphCL & 77.87 $\pm$ 0.41&  74.39 $\pm$ 0.45 & 78.62 $\pm$ 0.40& 86.80 $\pm$ 1.34 \\
\midrule
\textbf{ARIEL}  & 78.91 $\pm$ 0.36& \textbf{75.22 $\pm$ 0.26} &\textbf{79.15 $\pm$ 0.53}& \textbf{89.25 $\pm$ 1.18}  \\
\bottomrule
\end{tabular}
\vspace{5pt}
\caption{Graph classification accuracy in percentage on four real-world datasets. We bold the results with the best mean accuracy. The methods above the double line belong to the graph kernel methods, and the ones below the double line are unsupervised representation learning methods. The compared numbers are from the original paper under the same experimental setting. }
\label{tab:results1}
\vspace{-20pt}
\end{table*}
Compared with our basic framework GraphCL, which uses naive augmentation methods, \ariel\ achieves even stronger performance on all datasets. GraphCL does not show a clear advantage against previous baselines such as InfoGraph and it does not behave well on the dataset with small graph size (e.g., NCI1 and MUTAG). However, \ariel\ can take the lead on three of the datasets and greatly reduce the performance gap on NCI1 with the graph kernel methods. It can be clearly seen that \ariel\ behaves better than GraphCL on NCI1 and MUTAG with at least $1\%$ improvement in accuracy. In comparison with another graph contrastive learning method InfoGraph, we can also see that \ariel\ takes an overall lead on all datasets, even on MUTAG where InfoGraph shows a dominant advantage against other baselines.

The above empirical results on the node classification and graph classification tasks clearly demonstrate the advantage of \ariel\ on real-world graphs, which indicates the better augmentation strategy of \ariel.

\subsection{Results under Attack}
 The results on attacked graphs are summarized in Table \ref{tab:att_results}. Specifically, we evaluate all these methods on the attacked subgraph of Amazon-Computers, Amazon-Photo, Coauthor-CS, Coauthor-Physics, Facebook, and LastFM Asia, so their results are not directly comparable to the results in Table \ref{tab:results}. To compare with the previous results, we look at the datasets where \ariel\ takes the lead, and then find the performance of the second-best method on each dataset for both the original graph and the attacked one. If \ariel\ outperforms the second-best method by a much larger margin on the attacked graph compared with that on the original graph, we claim that \ariel\ is significantly robust on that dataset. 
 
\begin{table}[h]
\centering
\resizebox{\textwidth}{!}{%
\begin{tabular}{ccccccccc}
\toprule
\textbf{Method}  & \textbf{Cora} & \textbf{CiteSeer} & \textbf{\makecell{Amazon-\\Computers}} & \textbf{\makecell{Amazon-\\Photo}} & \textbf{\makecell{Coauthor-\\CS}} & \textbf{\makecell{Coauthor-\\Physics}}&\textbf{Facebook} &\textbf{\makecell{LastFM \\Asia}}  \\
\midrule
GCN &$80.03\pm0.91$&$62.98\pm1.20$& $84.10\pm1.05$ & $91.72\pm0.94$ & $80.32\pm0.59$ & $87.47\pm0.38$ & 70.07 $\pm$ 0.74 & 73.22 $\pm$ 0.85 \\
GAT &$79.49\pm1.29$&$63.30\pm1.11$&$81.60\pm1.59$&$90.66\pm 1.62$ & $77.75\pm0.80$& $86.65\pm0.41$&\textbf{72.02 $\pm$ 0.78} & 73.21 $\pm$ 0.64\\
\midrule
DeepWalk&$74.12\pm1.02$&$63.20\pm0.80$&$79.08\pm0.67$&$88.06\pm0.41$& $49.30\pm 1.23$ & $79.26\pm1.38$& 59.07 $\pm$ 1.01 & 67.61 $\pm$ 0.80\\
DGI &$80.84\pm0.82$&$64.25\pm0.96$ & $83.36\pm 0.55$ & $91.27\pm0.29$&$78.73\pm0.50$ & $85.88\pm0.37$& 70.52 $\pm$ 0.93 & 71.80 $\pm$ 0.59   \\
RDGI & $77.29\pm1.01$& $59.94\pm1.29$  &$82.35\pm0.59$& $90.63\pm 0.41$ & $83.09\pm0.64$&$83.58\pm 0.75$&$67.85\pm 1.19$&$63.59\pm0.91$\\
GMI &79.17 $\pm$ 0.76&$65.37\pm1.03$& $77.42\pm0.59$& $89.44\pm0.47$& $80.92\pm0.64$ & $87.72\pm0.45$ & 68.93 $\pm$ 0.83 & 58.89 $\pm$ 0.95\\
MVGRL & \textbf{80.90 $\pm$ 0.75}&$64.81\pm1.53$ & $83.76\pm0.69$ & $91.76\pm0.44$ & $79.49 \pm 0.75$ & $86.98\pm0.61$& 71.76 $\pm$ 0.69 & 73.42 $\pm$ 1.11\\
GRACE &$78.55\pm0.81$ & $63.17\pm 1.81$ & $84.74\pm1.13$ & $91.26\pm0.37$ &$80.61\pm0.63$ & $85.71\pm0.38$ &71.97 $\pm$ 0.98 &69.39 $\pm$ 0.63\\
GCA &$76.79\pm0.97$&$64.89\pm1.33$& $85.05\pm0.51$ &$91.71\pm0.34$&$82.72\pm0.58$&$89.00\pm0.31$& 69.54 $\pm$ 0.82 & 71.83 $\pm$ 1.03\\
\midrule
\textbf{ARIEL}  &  80.33 $\pm$ 1.25 & \textbf{69.13 $\pm$ 0.94} & \textbf{88.61 $\pm$ 0.46} & \textbf{92.99 $\pm$ 0.21} & \textbf{84.43 $\pm$ 0.59} & \textbf{89.09 $\pm$ 0.31} & 71.15 $\pm$ 1.19 & \textbf{73.94 $\pm$  0.78}\\
\bottomrule
\end{tabular}}
\vspace{5pt}
\caption{Node classification accuracy in percentage on the graphs under Metattack, where subgraphs of Amazon-Computers, Amazon-Photo, Coauthor-CS and Coauthor-Physics, Facebook and LastFM Asia are used for attack and their results are not directly comparable to those in Table \ref{tab:results}. We bold the results with the best mean accuracy. GCA is evaluated on its best variant on each clean graph.}
\label{tab:att_results}
\vspace{-20pt}
\end{table}
 
 Under this principle, we can see that \ariel\ is significantly robust on CiteSeer, with the margin to the second best method increasing from $0.58\%$ to $3.96\%$, Amazon-Computers, with the margin increasing from $2.11\%$ to $3.56\%$, and Coauthor-CS, with the margin increasing from $0.74\%$ to $1.71\%$. On Coauthor-Physics, \ariel\ and GCA both show clear robustness against the remaining methods.  Although some baselines are robust on specific datasets, for example, MVGRL on Cora, GMI on CiteSeer, GCN on Facebook, and GCA on Coauthor-CS and Coauthor-Physics, they fail to achieve consistent robustness over all datasets. Although GCA indeed makes GRACE more robust for most datasets, it is still vulnerable on Cora, CiteSeer, and Amazon-Computers, with more than $3\%$ lower than \ariel\ in the final accuracy. 
 
 Besides, we can also see that \ariel\ still shows high robustness on the datasets where it cannot take the lead. On Cora and Facebook, \ariel\ is only less than $1\%$ lower in accuracy than the best method and it is still better than most baselines. It does not show a sudden performance drop on any dataset, such as MVGRL on CiteSeer and GCA on Facebook.

Basically, MVGRL and GCA can improve the robustness of their respective grounded frameworks over specific datasets, but we find this kind of improvement relatively minor. Instead, \ariel\ has more significant improvements and greatly increases robustness. It is worth noting that though RDGI is also developed to improve the robustness of graph representation learning, it does not show a clear advantage against DGI in our evaluation. This is mainly because the original RDGI considers the attack at test time and what we evaluate is the robustness against the attack at training time, which is more common for the graph learning tasks \cite{bojchevski2019adversarial, Z_gner_2018, zugner2018adversarial}. Based on the comparative results, we claim that \ariel\ is more robust than previous graph contrastive learning methods in the face of the adversarial attack.

\subsection{Ablation Study}
For this section, we first set $\epsilon_2$ as $0$ and investigate the role of adversarial contrastive loss. The adversarial contrastive loss coefficient $\epsilon_1$ controls the portion of the adversarial contrastive loss in the final loss. When $\epsilon_1=0$, the final loss reduces to the regular contrastive loss in Equation (\ref{eq:contra}). To explore the effect of the adversarial contrastive loss, we fix other parameters in our best models on Cora and CiteSeer and gradually increase $\epsilon_1$ from $0$ to $2$. The changes in the final performance are shown in Figure \ref{fig:eps1}.

The dashed line represents the performance of GRACE with subgraph sampling, i.e., $\epsilon_1=0$. Although there exist some variations, \ariel\ is always above the baseline under a positive $\epsilon_1$ with around $2\%$ improvement. The subgraph sampling trick may sometimes help the model, for example, it improves GRACE without subgraph sampling by $1\%$ on CiteSeer, but it could be detrimental as well, such as on Cora. This is understandable since subgraph sampling can simultaneously enrich the data augmentation and lessen the number of negative samples, both critical to contrastive learning. While for the adversarial contrastive loss, it has a stable and significant improvement on GRACE with subgraph sampling, which demonstrates that the performance improvement of \ariel\ mainly stems from the adversarial loss rather than the subgraph sampling.
\begin{figure}[h]
    \begin{minipage}{0.4\linewidth}
    \includegraphics[width=\linewidth]{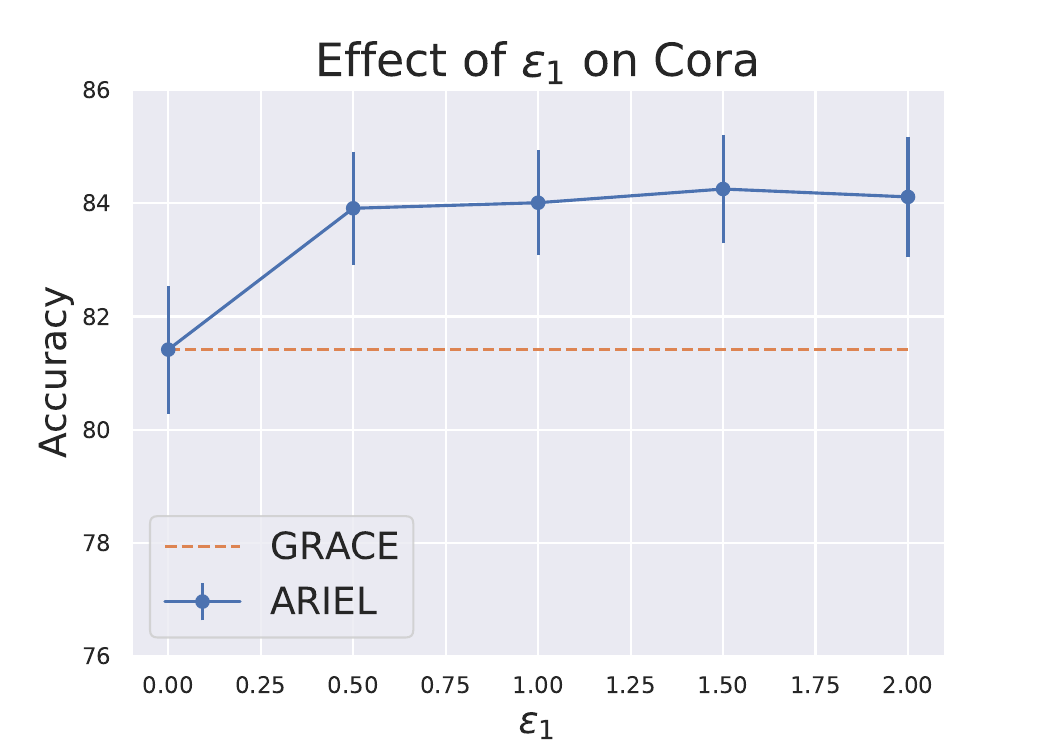}
    \vspace{-15pt}
    \end{minipage}
    \begin{minipage}{.4\linewidth}
    \includegraphics[width=\linewidth]{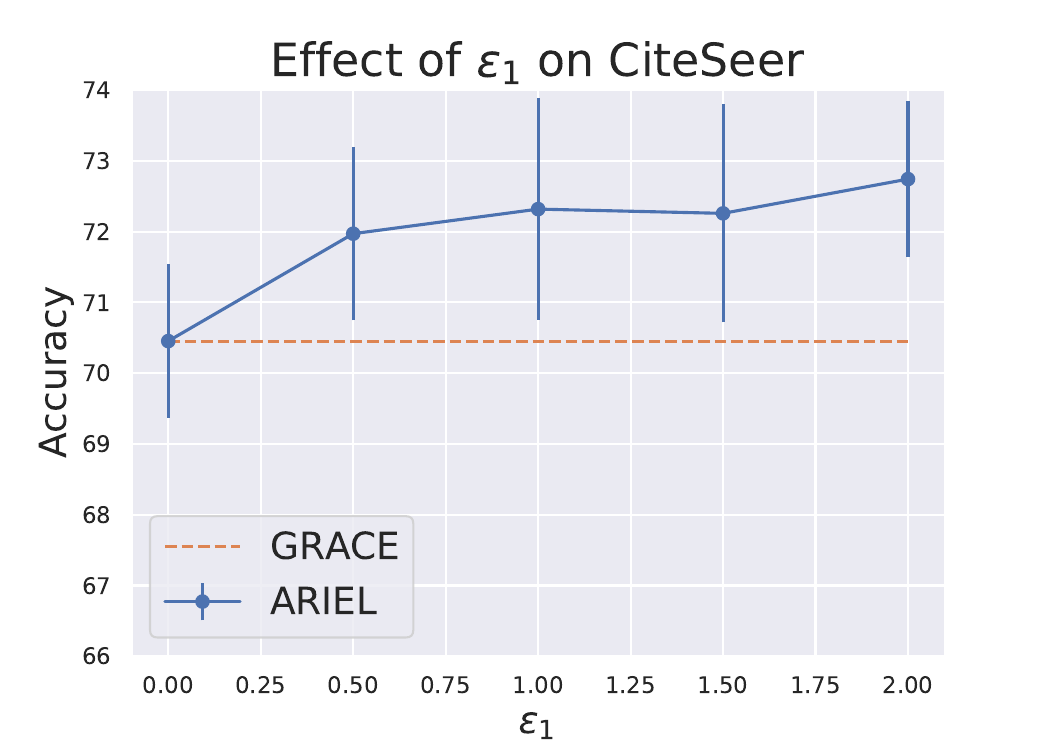}
    \end{minipage}
    \caption{Effect of adversarial contrastive loss coefficient $\epsilon_1$ on Cora and CiteSeer. The dashed line represents the performance of GRACE with subgraph sampling.}
    \label{fig:eps1}
    \vspace{-15pt}
\end{figure}

Next, we fix all other parameters and check the behavior of $\epsilon_2$. Information regularization is mainly designed to stabilize the training of \ariel. We find \ariel\ would experience the collapsing at the early training stage and the information regularization could mitigate this issue. We choose the best run on Amazon-Photo, where the collapsing frequently occurs, and similar to before, we gradually increase $\epsilon_2$ from $0$ to $2$, the results are shown in Figure \ref{fig:eps2} (left).
\begin{figure}[h]
    \begin{minipage}{0.4\linewidth}
    \includegraphics[width=\linewidth]{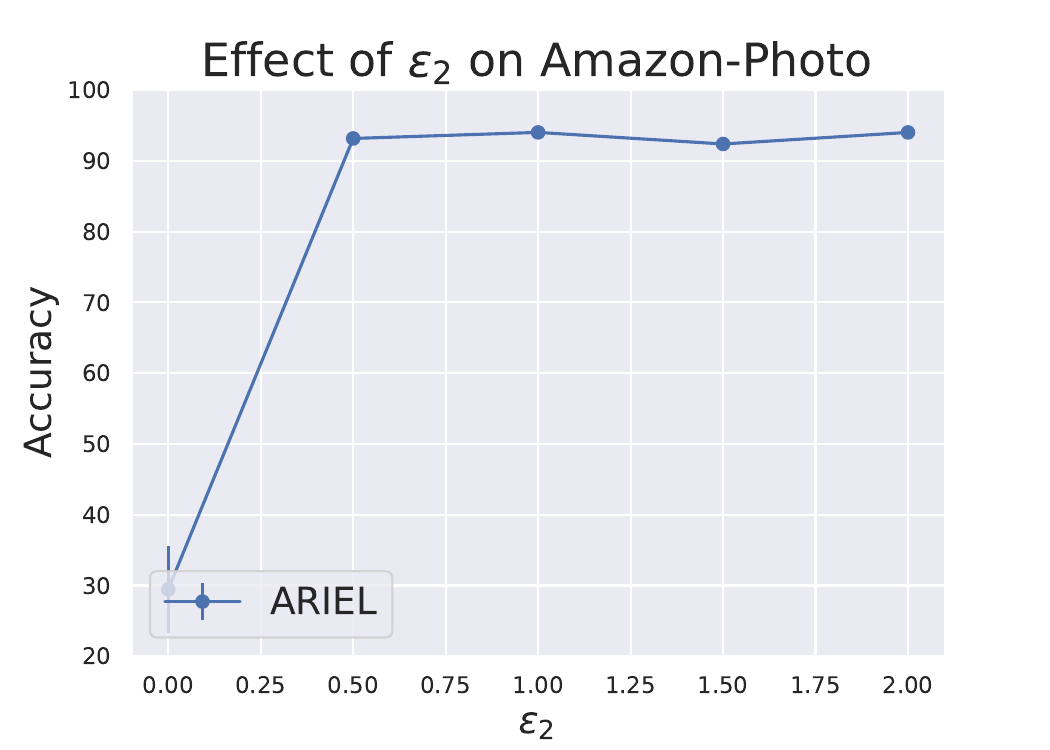}
    \end{minipage}
    \begin{minipage}{.4\linewidth}
    \includegraphics[width=\linewidth]{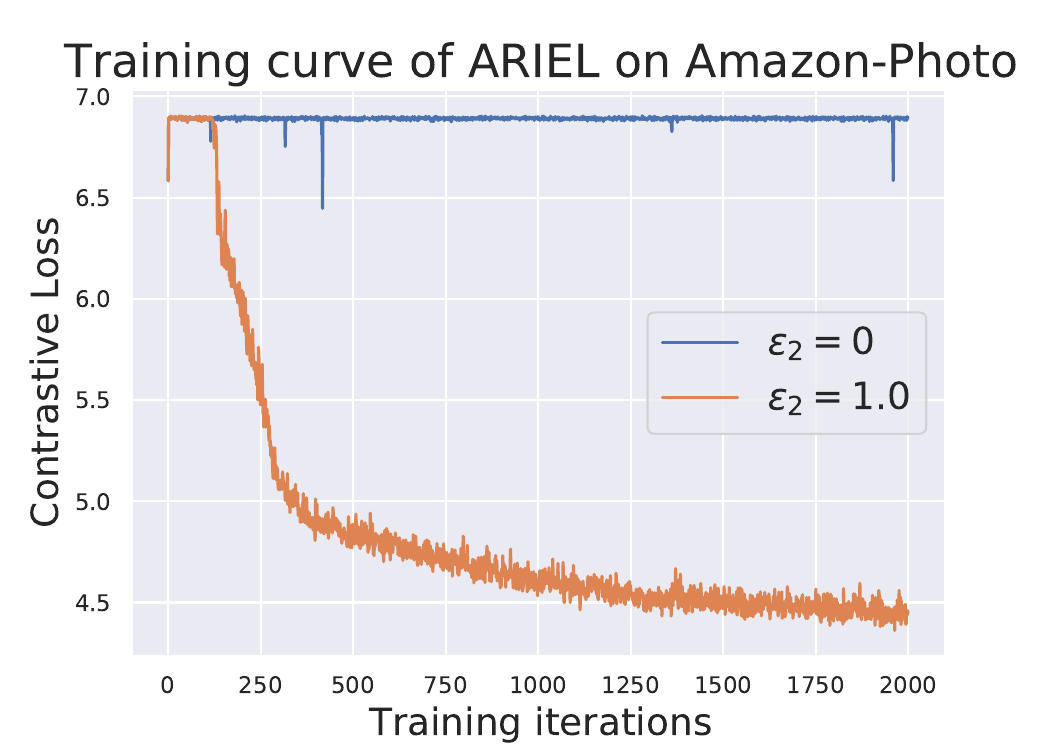}
    \end{minipage}
    \caption{Effect of information regularization on Amazon-Photo. The left figure shows the model performance under different $\epsilon_2$ and the right figure plots the training curve of \ariel\ under $\epsilon_2=0$ and $\epsilon_2=1.0$.}
    \label{fig:eps2}
    \vspace{-10pt}
\end{figure}
As can be seen, without using the information regularization, \ariel\ could collapse without learning anything, while setting $\epsilon_2$ greater than $0$ can effectively avoid this situation. To further illustrate this, we draw the training curve of the regular contrastive loss in Figure \ref{fig:eps2} (right), for the best \ariel\ model on Amazon-Photo and the same model by simply removing the information regularization. Without information regularization, the model could get stuck in a bad parameter area and fail to converge, while information regularization can resolve this issue.

\subsection{Training Analysis}
Here we compare the training of \ariel\ on node-level contrastive learning to other methods on our NVIDIA Tesla V100S GPU with 32G memory.

Adversarial attacks on graphs tend to be highly computationally expensive since the attack requires the gradient calculation over the entire adjacency matrix, which is of size $O(n^2)$. For \ariel, we resolve this bottleneck with subgraph sampling on large graphs and empirically show that the adversarial training on the subgraph still yields significant improvements, without increasing the number of training iterations. In our experiments, we find GMI the most memory inefficient, which cannot be trained on Coauthor-CS, Coauthor-Physics, and Facebook. For DGI, MVGRL, GRACE, and GCA, the training of them also amounts to $30$G GPU memory on Coauthor-Physics while the training of \ariel\ requires no more than $8$G GPU memory. In terms of the training time, DGI and MVGRL are the fastest to converge, but it takes MVGRL a long time to compute the diffusion matrix on large graphs. \ariel\ is slower than GRACE and GCA on Cora and CiteSeer, but it is faster on large graphs like  Coauthor-CS and Coauthor-Physics, with the training time for each iteration invariant to the graph size due to the subgraph sampling. On the largest graph Coauthor-Physics, each iteration takes GRACE 0.875 second and GCA 1.264 seconds, while it only takes \ariel\ 0.082 second. This demonstrates that \ariel\ has even better scalability than GRACE and GCA.

Subgraph sampling, under some mild assumptions, could be an efficient way to reduce the computational cost for any node-level contrastive learning algorithm. Besides this general trick, we want to point out that the attack is in fact not always needed on the whole graph to generate the adversarial view. Another solution to avoid explosive memory is to select some anchor nodes and only perturb the edges among these anchor nodes and their features. Since the scalability issue has been resolved by subgraph sampling on all datasets appearing in this work, we will not further discuss the details of this method and empirically prove its effectiveness, but leave it for future work.

\section{Related Work}
In this section, we review the related work in the following three categories: graph contrastive learning, adversarial attack on graphs, and adversarial contrastive learning.

\subsection{Graph Contrastive Learning}
Contrastive learning is known for its simplicity and strong expressivity. Traditional methods ground the contrastive samples on the node proximity in the graph, such as DeepWalk \cite{Perozzi:2014:DOL:2623330.2623732} and node2vec \cite{grover2016node2vec}, which use random walks to generate the node sequences and approximate the co-occurrence probability of node pairs. However, these methods can only learn the embeddings for the graph structures but regardless of the node features. 

GNNs \cite{kipf2017semisupervised, velickovic2018graph} can easily capture the local proximity and node features \cite{velickovic2018deep,zhao2020data,you2020graph,jing2021multiplex,jing2021network}. To further improve the performance, the Information Maximization (InfoMax) principle \cite{linsker1988self} has been introduced. DGI \cite{velickovic2018deep} is adapted from Deep InfoMax \cite{hjelm2019learning} to maximize the mutual information between the local and global features. It generates the negative samples with a corrupted graph and contrasts the node embeddings with the original graph embedding and the corrupted one. Based on a similar idea, GMI \cite{peng2020graph} generalizes the concept of mutual information to the graph domain by separately defining the mutual information on the features and edges. Graph Community Infomax (GCI) \cite{10.1145/3480244} instead tries to maximize the mutual information between the community representation and the node representation for those positive pairs. Another follow-up work of DGI, MVGRL \cite{hassani2020contrastive}, maximizes the mutual information between the first-order neighbors and graph-diffusion. On the graph level, InfoGraph \cite{https://doi.org/10.48550/arxiv.1908.01000} makes use of a similar idea to maximize the mutual information between the global representation and patch representation from the same graph.
HDI \cite{jing2021hdmi} introduces high-order mutual information to consider both intrinsic and extrinsic training signals.
However, mutual information-based methods generate the corrupted graphs by simply randomly shuffling the node features.
Recent methods exploit the graph topology and features to generate better-augmented graphs. GCC \cite{Qiu_2020} adopts a random-walk-based strategy to generate different views of the context graph for a node, but it ignores the augmentation on the feature level. GCA \cite{zhu2020deep}, instead, considers the data augmentation from both the topology and feature level, and introduces the adaptive augmentation by considering the importance of each edge and feature dimension. To investigate the power of different data augmentations in graph domains, GraphCL \cite{you2020graph} systematically studies the different combinations of graph augmentation strategies and applies them to different graph learning settings. 
Unlike the above methods which construct the data augmentation samples based on domain knowledge, \ariel\ uses an adversarial attack to construct the view that maximizes the contrastive loss, which is more informative with broader applicability.

\subsection{Adversarial Attack on Graphs}
Deep learning methods are known vulnerable to adversarial attacks, and this is also the case in the graph domain. As shown by Bojchevski et al. \cite{bojchevski2019adversarial}, both random-walk-based methods and GNNs-based methods could be attacked by flipping a small portion of edges. Xu et al. \cite{xu2019topology} propose a PGD attack and min-max attack on the graph structure from the optimization perspective. NETTACK \cite{Z_gner_2018} is the first to attack GNNs using both structure attack and feature attack, causing a significant performance drop of GNNs on the benchmarks. After that, Metattack \cite{zugner2018adversarial}  formulates the poisoning attack of GNNs as a meta-learning problem and achieves remarkable performance by only perturbing the graph structure. Node Injection Poisoning Attacks \cite{NIPA} uses a hierarchical reinforcement learning approach to sequentially manipulate the labels and links of the injected nodes. Recently, InfMax \cite{ma2021adversarial} formulated the adversarial attack on GNNs as an influence maximization problem.

\subsection{Adversarial Contrastive Learning}
The concept of adversarial contrastive learning is first proposed on visual domains \cite{kim2020adversarial, jiang2020robust, ho2020contrastive}. All these works propose a similar idea to use the adversarial sample as a form of data augmentation in contrastive learning and it can bring better downstream task performance and higher robustness. ACL \cite{kim2020adversarial} studies the different paradigms of adversarial contrastive learning, by replacing one or two of the augmentation views with the adversarial view generated by PGD attack \cite{madry2019deep}. CLAE \cite{ho2020contrastive} and RoCL \cite{jiang2020robust} use FGSM \cite{goodfellow2015explaining} to generate an additional adversarial view atop the two standard augmentation views. RDGI \cite{Xu2020UnsupervisedAR} and AD-GCL  \cite{suresh2021adversarial} are the most relevant work to ours in graph domains. RDGI quantifies the robustness of node representation as the decrease in mutual information between the graph and its embedding under adversarial attacks. It learns a robust node representation by simultaneously minimizing the standard contrastive learning loss and improving the robustness. Nonetheless, its objective sacrifices the expressiveness of the node representation for robustness while \ariel\ can improve both of them. AD-GCL formulates adversarial graph contrastive learning in a min-max form and uses a parameterized network for edge dropping. However, AD-GCL is designed for the graph classification task only and does not explore the robustness of graph contrastive learning. Finally, All previous adversarial contrastive learning methods do not take scalability into consideration, with visual models and AD-GCL dealing with independent instances and RDGI only working on small graphs, but \ariel\ can work for both interconnected instances (node embedding) and independent instances (graph embedding) on a large scale.
 
 Some recent theoretical analyses further reveal the vulnerability of contrastive learning. Jing et al. \cite{https://doi.org/10.48550/arxiv.2110.09348} show that dimensional collapse could happen if the variation of the data augmentation exceeds the variation of the data itself in contrastive learning. Wang et al. \cite{wang2022chaos} prove that contrastive learning could cluster the instances from the same class only when the support of different intra-class samples overlaps under data augmentation. The representations learned by contrastive learning may fail in downstream tasks when either under-overlapping or overly overlapping happens. From these perspectives, searching for adversarial contrastive samples in a safe area is more likely to generate useful representations for downstream tasks.

\section{Conclusion}
In this paper, we propose a universal framework for graph contrastive learning by introducing an adversarial view, scaling it through subgraph sampling, and stabilizing it through information regularization. It consistently outperforms the state-of-the-art graph contrastive learning methods in the node classification and graph classification tasks and exhibits a higher degree of robustness to the adversarial attack. Our framework is not limited to the graph contrastive learning frameworks we build on in this paper, and it can be naturally extended to other graph contrastive learning methods as well. In the future, we plan to further investigate (1) the adversarial attack on graph contrastive learning and (2) the integration of graph contrastive learning and supervised methods.

\section*{Acknowledgements}
This work is supported by NSF (1947135, %hh-career-new
2134079, %MoDL
2316233, % PPOSS
and 2324770%RDH
),
the NSF Program on Fairness in AI in collaboration with Amazon (1939725), % FAI
    DARPA (HR001121C0165), %DARPA INCAS
NIFA (2020-67021-32799), % NSF-NIFA AIFarm@UIUC
DHS (17STQAC00001-07-00), %DHS COE privacy
ARO (W911NF2110088), %durip
the C3.ai Digital Transformation Institute, %C3.AI
MIT-IBM Watson AI Lab, %for papers with Yada Zhu
and IBM-Illinois Discovery Accelerator Institute. %IIDAI
The content of the information in this document does not necessarily reflect the position or the policy of the Government or Amazon, and no official endorsement should be inferred.  The U.S. Government is authorized to reproduce and distribute reprints for Government purposes notwithstanding any copyright notation here on.

%%
%% The next two lines define the bibliography style to be used, and
%% the bibliography file.

\bibliographystyle{ACM-Reference-Format}
\bibliography{main}

\end{document}